\newtheorem{assumption}{Assumption}
\newtheorem{theorem}{Theorem}
\newtheorem{lemma}[theorem]{Lemma}
\newcommand{\BlackBox}{\rule{1.5ex}{1.5ex}}  
    \renewenvironment{proof}{\par\noindent{\bf Proof\ }}{\hfill\BlackBox\\[2mm]}
    \newenvironment{proof}{\par\noindent{\bf Proof\ }}{\hfill\BlackBox\\[2mm]}
  \let\oldparagraph\paragraph
  \renewcommand{\paragraph}{
    \@ifstar
      \xxxParagraphStar
      \xxxParagraphNoStar
  }
  \newcommand{\xxxParagraphStar}[1]{\oldparagraph*{#1}\mbox{}}
  \newcommand{\xxxParagraphNoStar}[1]{\oldparagraph{#1}\mbox{}}
  \let\oldsubparagraph\subparagraph
  \renewcommand{\subparagraph}{
    \@ifstar
      \xxxSubParagraphStar
      \xxxSubParagraphNoStar
  }
  \newcommand{\xxxSubParagraphStar}[1]{\oldsubparagraph*{#1}\mbox{}}
  \newcommand{\xxxSubParagraphNoStar}[1]{\oldsubparagraph{#1}\mbox{}}
\patchcmd\longtable{\par}{\if@noskipsec\mbox{}\fi\par}{}{}
\def\maxwidth{\ifdim\Gin@nat@width>\linewidth\linewidth\else\Gin@nat@width\fi}
\def\maxheight{\ifdim\Gin@nat@height>\textheight\textheight\else\Gin@nat@height\fi}
\def\fps@figure{htbp}
  \renewcommand*\contentsname{Table of contents}
  \newcommand\contentsname{Table of contents}
  \renewcommand*\listfigurename{List of Figures}
  \newcommand\listfigurename{List of Figures}
  \renewcommand*\listtablename{List of Tables}
  \newcommand\listtablename{List of Tables}
  \renewcommand*\figurename{Figure}
  \newcommand\figurename{Figure}
  \renewcommand*\tablename{Table}
  \newcommand\tablename{Table}
\newcommand{\anon}{1}
\begin{document}

\def\spacingset#1{\renewcommand{\baselinestretch}%
{#1}\small\normalsize} \spacingset{1}


\if1\anon
{
  \title{\bf Robust Reinforcement Learning under Diffusion Models for Data with Jumps}
  \author{Chenyang Jiang\\
    Department of Statistics, University of Wisconsin-Madison\\
    Madison, WI 53706, USA \\
    cjiang@wisc.edu \\
    and\\
    Donggyu Kim\\
    Department of Economics, University of California, Riverside \\
    Riverside, CA 92521, USA\\
    donggyu.kim@ucr.edu\\
    and\\
    Alejandra Quintos\thanks{Supported by the Office of the Vice Chancellor for Research and Graduate Education at the University of Wisconsin-Madison with funding from the Wisconsin Alumni Research Foundation.}\\
    Department of Statistics, University of Wisconsin-Madison\\
    Madison, WI 53706, USA\\
    alejandra.quintos@wisc.edu\\
    and\\
    Yazhen Wang \\
    Department of Statistics, University of Wisconsin-Madison\\
    Madison, WI 53706, USA\\
    yzwang@stat.wisc.edu}
  \maketitle
} \fi
       
\if0\anon
{
  \bigskip
  \bigskip
  \bigskip
  \begin{center}
    {\LARGE\bf Robust Reinforcement Learning under Diffusion Models for Data with Jumps}
\end{center}
  \medskip
} \fi

\bigskip

\allowdisplaybreaks

\begin{abstract}
Reinforcement Learning (RL) has demonstrated considerable success in tackling complex decision-making problems across diverse domains. However, continuous-time RL remains challenging, particularly when the underlying state dynamics are modeled by stochastic differential equations (SDEs) with jump components. In this paper, we address these challenges by proposing the Mean-Square Bipower Variation Error (MSBVE) algorithm, which enhances robustness and convergence in environments with significant stochastic noise and jumps. We begin by revisiting the widely used Mean-Square Temporal Difference Error (MSTDE) algorithm and highlight its susceptibility to bias when jump components are present. 
In contrast, the proposed MSBVE algorithm minimizes the bipower variation error that has the form of a product of non-overlapping absolute differences, allowing it to more accurately estimate value functions in the presence of jumps. Theoretical analysis and numerical studies confirm that MSBVE outperforms MSTDE in jump-diffusion settings. These results emphasize the critical role of alternative error metrics in advancing the resilience and accuracy of RL algorithms in continuous-time stochastic systems with jumps.
\end{abstract}

\noindent%
{\it Keywords:} bipower variation, continuous time, jump process, stochastic differential equations, temporal difference
\vfill

\newpage
\spacingset{1.8} 

\section{Introduction}\label{sec-intro}

Reinforcement Learning (RL) has emerged as a powerful framework for solving intricate decision-making challenges across a variety of domains, including robotics, finance, and healthcare. In RL, an agent interacts with a dynamic environment and learns to select actions that maximize cumulative rewards over time. A central objective in this paradigm is the accurate estimation of the value function, which quantifies the expected return associated with each state under a given policy. Precise value function estimation is crucial for policy improvement and long-term performance.

Several algorithms for estimating the value function within the RL framework have been developed for purely continuous diffusion processes. For instance, \cite{Doya:2000} introduced the temporal-difference (TD) algorithm tailored for the continuous-time deterministic case, while \cite{jiaandzhou2022} elucidated the convergence properties of the Mean Square TD Error (MSTDE) algorithm within continuous-time stochastic settings. These algorithms endeavor to estimate the value function by iteratively updating a parameter vector using distinct error metrics, thereby shedding light on their convergence behavior under diverse noise conditions. The optimal parameter derived from these algorithms ensures that the estimated value function closely approximates the true value function.

However, in practice, observed data often exhibit jumps, which makes the pure continuous diffusion component unobservable. These discontinuities pose significant challenges for standard RL algorithms. In many applications, especially in finance, the primary interest lies in learning models based on the latent continuous component. For instance, volatility processes are crucial for quantifying investment risk. Since jumps typically reflect unpredictable news events, they are often treated as noise, and attention is focused on estimating the volatility of the continuous part.
At the very least, given the fundamentally different characteristics of the continuous and jump components, they should be learned separately.
See \citet{andersen2007roughing, corsi2010threshold, oh2024robust}.
The latent continuous process captures the typical dynamics that we aim to learn.
Therefore, it is crucial to develop robust RL estimation methods that can accurately recover the latent continuous component in the presence of jumps.

 In this paper, we delve into the task of estimating the value function within RL frameworks operating in continuous-time settings, where state dynamics are delineated by stochastic differential equations (SDEs) potentially incorporating jump components.  
 We begin by reviewing the Mean Square Temporal Difference Error (MSTDE) algorithm, which aims to estimate the value function by minimizing the mean-square TD error. However, we highlight its limitations in environments with jump-driven dynamics.

 To overcome these challenges, we propose the Mean Square Bipower Variation Error (MSBVE) algorithm, which minimizes the bipower variation error defined as a product of non-overlapping absolute differences. This approach enhances robustness to stochastic noise and discontinuities, which leads to more accurate value function estimation in the presence of jumps.
 Through rigorous theoretical analysis and comprehensive simulation studies,  we demonstrate the effectiveness of the MSBVE algorithm in accurately estimating the value function under challenging conditions characterized by substantial variability arising from jump components. Our results clearly show that MSBVE outperforms the MSTDE algorithm, particularly in environments where substantial jump-induced noise hinders MSTDE's convergence. These findings highlight the critical importance of adopting alternative error metrics, such as the bipower variation error, to enhance the robustness and accuracy of RL algorithms in continuous-time settings with jump components.

 The key distinction of our work lies in its emphasis on robustness.
 While recent studies have primarily focused on deriving optimal control policies in jump-diffusion settings, our approach aims to develop an algorithm that ensures the estimated value function remains resilient to jump-induced noise. 
This leads to a more stable and reliable estimation framework for general SDEs with jumps and helps learn the latent continuous diffusion process. 
 In contrast, \cite{gao2024reinforcementlearningjumpdiffusionsfinancial} and \cite{Bo2024} employ randomized control methods alongside q-learning to derive the optimal policy distribution for general SDEs with jumps, differing mainly in their choice of entropy terms. \cite{Bender2023} narrows its focus to the mean-variance portfolio selection problem, providing theoretical optimal solutions but not addressing general SDEs with jumps. \cite{guo2022} introduces the Greedy Least-Square algorithm and derives sublinear regret bounds, though its model assumes that only the drift term depends on the state and control, excluding the diffusion and jump components. Lastly, \cite{Denkert2024} replaces control with a random point process, applying an actor-critic algorithm to solve a randomized problem that ultimately leads to the optimal value functions for the original system. Unlike these works, which concentrate on finding the optimal control under jump diffusions, our research addresses the challenge of value function estimation under jump dynamics and emphasizes robustness against jump noise.

 The remainder of this paper is structured as follows. Section \ref{sec-2} provides an overview of the setup for the RL problem within continuous-time settings. In Section \ref{sec-3}, we present the detailed descriptions of the MSTDE and MSBVE algorithms. Section \ref{simulation} reports the simulation results and analyzes the performance of these algorithms. In Section \ref{application}, we apply the MSBVE algorithm to the mean-variance portfolio selection problem and compare its performance with MSTDE using the real intraday trading data. Finally, we draw conclusions in Section \ref{sec-6}, summarizing the key findings and implications of our study. All the detailed proofs are given in appendix A.

\section{Model Setup} \label{sec-2}
In a general RL problem, the state space and action space are denoted by $\mathcal{S}$ and $\mathcal{A}$, respectively. We consider a finite time horizon $\left[0, T\right]$, where $T < \infty $ is fixed throughout. At time $t$, the environment provides the current state $X_{t}\in\mathcal{S}$. The agent then selects an action $A_{t}=\pi(X_{t})\in\mathcal{A}$ according to policy $\pi$. A reward $R_{t}=r(t,X_{t},A_{t})$ is calculated based on this action. Subsequently, the environment generates the next state $X_{t+1}$ based on the previous state $X_{t}$ and action $A_{t}$. In the underlying true model, the state $X_{t}$ can be described by the following SDE, defined on a complete filtered probability space $(\Omega,\mathcal{F},\mathbb{P},\{\mathcal{F}_{t}\}_{t\geq 0})$:
\begin{align*}
    dX_{t}=b(t,X_{t},A_{t})dt+\sigma(t,X_{t},A_{t})dW_{t}.
\end{align*}
However, in practice, the observed data often contain noise introduced by jumps. This can be modeled by an SDE with jumps as follows:
\begin{align*}
    dX_{t}=b(t,X_{t},A_{t})dt+\sigma(t,X_{t},A_{t})dW_{t}+L(t,X_{t},A_{t})dN_{t}.
\end{align*}

 In real-world scenarios, practical constraints often limit the ability to frequently change policies. For instance, in financial trading, risk management policies may dictate specific trading strategies that cannot be altered frequently. Therefore, this paper focuses on an RL problem with a fixed policy under the continuous-time settings. Since the policy $\pi$ is fixed, $A_{t}=\pi(X_{t})$ is a function of $X_{t}$, we use the following SDE without the action $A_{t}$ to describe the state $X_{t}$:
\begin{align}\label{sde_jump}
    dX_{t}=b(t,X_{t})dt+\sigma(t,X_{t})dW_{t}+L(t,X_{t})dN_{t},
\end{align}
where $W_{t}$ is a $m$-dimensional $\mathcal{F}_{t}$-adapted Brownian motion, $N_{t}$ is a $n$-dimensional $\mathcal{F}_{t}$-adapted counting process independent of $W_{t}$, and $b:[0,T]\times\mathbb{R}^{d}\rightarrow\mathbb{R}^{d}$, $\sigma:[0,T]\times\mathbb{R}^{d}\rightarrow\mathbb{R}^{d\times m}$, and $L:[0,T]\times\mathbb{R}^{d}\rightarrow\mathbb{R}^{d\times n}$ are the drift coefficient, diffusion coefficient, and jump size, respectively.
The value function $J:[0,T]\times\mathbb{R}^{d}\rightarrow\mathbb{R}^{k}$ is defined as:
\begin{align}\label{value}
    J(t,x)=E\left[\int_{t}^{T}r(s,X_{s})ds+h(X_{T})|X_{t}=x\right],
\end{align}
where $r:[0,T]\times\mathbb{R}^{d}\rightarrow\mathbb{R}^{k}$ is a reward function based on the current time and state, and $h:\mathbb{R}^{d}\rightarrow\mathbb{R}^{k}$ is a reward function applied at the end of $T$.

 We make the following standard assumptions.

\begin{assumption}\label{ass1}
 The following conditions hold true.
\begin{itemize}
\item[(i)] $b,\sigma,L,r$, and $h$ are all continuous functions in their respective arguments;

\item[(ii)] $b$ and $\sigma$ are uniformly Lipschitz in $x$, i.e., there exists a constant $C>0$ such that
\[| b(t,x)-b(t,x' )| +| \sigma(t,x)-\sigma(t,x')| \leq C| x-x'| ,\ \forall t\in [0,T],\ x,x'\in\mathbb{R}^{d};\]

\item[(iii)] $b$ and $\sigma$ both have linear growth in $x$, i.e., there exists a constant $C>0$  such that
\[| b(t,x)| ^{2}+| \sigma(t,x)| ^{2}\leq C^{2}(1+| x| ^{2}),\ \forall (t,x)\in[0,T]\times\mathbb{R}^{d};\]

\item[(iv)] $r$, $h$, and $L$ have polynomial growth in $x$, i.e., there exist constants $C>0$ and $\nu\geq 1$ such that for all $ (t,x)\in [0,T]\times\mathbb{R}^{d}$,
\[| r(t,x)| \leq C(1+| x| ^{\nu}),\quad | h(x)| \leq C(1+| x| ^{\nu}),\quad | L(t,x)| \leq C(1+| x| ^{\nu});\]

\item[(v)] The total number of jumps $N_{T}$ is finite with probability 1, i.e. $P(N_{T}<\infty)=1$. Moreover, for all $ i = 0, \dots, n-1 $, for any time interval $[t_{i},t_{i+1}]$ with width $\Delta t=t_{i+1}-t_{i}>0$, the probability of there existing exactly one jump within this interval tends to zero as $\Delta t\rightarrow0$, i.e. $P(N_{t_{i+1}}-N_{t_{i}}=1)\rightarrow0$ as $\Delta t\rightarrow 0$.
\end{itemize}
\end{assumption}

 Assumption \ref{ass1}(i)-(iii) guarantee the existence and uniqueness of a solution to the SDE in (\ref{sde_jump}) (see Theorem 5.2.9 in \cite{karatzas2014brownian}).  Assumption \ref{ass1}(iv) ensures that $J(t,x)$ is finite for any $(t,x)$. Assumption \ref{ass1}(v) ensures that the jump process has finite jumps in finite time.

 Recall that J can be characterized by the following PDE based on the Feynman-Kac formula (\cite{karatzas2014brownian}):
\begin{align}
\left\{\begin{array}{ll}\label{pde}
     \mathcal{L}J(t,x)+r(t,x)=0,\ (t,x)\in [0,T]\times\mathbb{R}^{d}, \\
     J(T,x)=h(x),
\end{array}\right.\end{align}
where
\[\mathcal{L}J(t,x):=\frac{\partial J}{\partial t}(t,x)+b(t,x)\cdot\frac{\partial J}{\partial x}(t,x)+\frac{1}{2}\sigma^{2}(t,x)\cdot\frac{\partial^{2} J}{\partial x^{2}}(t,x).\]

\begin{assumption} \label{ass2}
 The above PDE admits a classical solution $J\in C^{1,2}([0,T]\times\mathbb{R}^{d})$ satisfying the polynomial growth condition, i.e., there exist constants $C>0$ and $\mu\geq 1$ such that
\[|J(t,x)|\leq C(1+|x|^{\mu}),\ \forall (t,x)\in [0,T]\times\mathbb{R}^{d}.\]
\end{assumption}

 Assumption \ref{ass2} guarantees the existence of a solution to the PDE in (\ref{pde}). 

 To analyze this RL problem, we discretize the time interval $[0,T]$ into $[t_{i},t_{i+1}]$ with equal time step $\Delta t$, where $i=0,1,2,\ldots,n-1$ and $0=t_{0}<t_{1}<\cdots<t_{n}=T$. The goal is to estimate the value function $J$ using a parametric family of functions $J^{\theta}$ with $\theta\in\Theta\subset\mathbb{R}^{l}, l\in\mathbb{N}^{+}$ (which could be reduced in complexity if we have some prior knowledge about the value function or the underlying dynamics) based on the observed states $X_{t_{i}}$ and rewards $R_{t_{i}}=r(t_{i},X_{t_{i}})$, without the knowledge of the explicit functional form of $b,\sigma,L,r$, and $h$. A key challenge is to develop an algorithm that remains robust to jumps introduced by noisy observed data, even though the true underlying model follows an SDE without jumps. We need the following assumption for $J^{\theta}$. 
 First, introduce
\begin{align*}
    L_{\mathcal{F}}^{2}([0,T])=&\Big\{\kappa=\{\kappa_{t},0\leq t\leq T\}\text{ is real-valued and }\mathcal{F}_{t}\text{-progressively measurable:}\\
&  \quad E\int_{0}^{T}|\kappa_{t}|^{2}dt<\infty\Big\}.
\end{align*}
Define the $L^{2}$-norm $\Vert\kappa\Vert_{L^{2}}=(E\int_{0}^{T}|\kappa_{t}|^{2}dt)^{1/2}$ on the space $L_{\mathcal{F}}^{2}([0,T])$.
\begin{assumption}\label{ass3}
  $J^{\theta}(t,x)$ is a smooth function of $(t,x,\theta)$ with all the derivatives existing. $E|J^{\theta}(t,X_{t})|^{2}$ is a continuous function of $\theta$ for all $t\in[0,T]$ and $\theta\in\Theta$. Moreover, for all $\theta\in\Theta$, $\mathcal{L}J^{\theta}(\cdot,X_{.}), |\frac{\partial J^{\theta}}{\partial x}(\cdot,X_{.})^{\dagger}\sigma(\cdot,X_{.})|\in L_{\mathcal{F}}^{2}([0,T])$, and their $L^{2}$-norms are continuous functions of $\theta$.
\end{assumption}

 Assumption \ref{ass3} ensures that $E|J^{\theta}(t,X_{t})|^{2}$, $\Vert\mathcal{L}J^{\theta}(\cdot,X_{\cdot})\Vert_{L^{2}}$, and $\Vert\frac{\partial J^{\theta}}{\partial x}(\cdot,X_{.})^{\dagger}\sigma(\cdot,X_{.})\Vert_{L^{2}}$ are finite in any given compact set $\Theta$.

\section{Theoretical Analysis of Algorithms} \label{sec-3}
We commence our exploration by revisiting the Temporal-Difference (TD) Error proposed by \cite{Doya:2000} and the Mean-Square TD Error (MSTDE) algorithm, along with its convergence behavior described by \cite{jiaandzhou2022}.

\subsection{Deterministic Setting}
In the deterministic setting, the state $X_{t}$ follows an ODE:
\[dX_{t}=b(t,X_{t})dt.\]
Since the states $X_{t}$ are deterministic, the expectation equals itself, and the equation (\ref{value}) becomes 
\[J(t,X_{t})=\int_{t}^{T}r(s,X_{s})ds+h(X_{T})=J(t+\Delta t,X_{t+\Delta t})+\int_{t}^{t+\Delta t}r(s,X_{s})ds.\]
Moving $J(t,X_{t})$ to the right side and dividing both sides by $\Delta t$, we have 
\begin{align}\label{tde}
    \frac{J(t+\Delta t,X_{t+\Delta t})-J(t,X_{t})}{\Delta t}+\frac{1}{\Delta t}\int_{t}^{t+\Delta t}r(s,X_{s})ds=0.
\end{align}
As $\Delta t\rightarrow 0$, the left side of (\ref{tde}) becomes 
\begin{align*}
    \delta_{t}:=\frac{d}{dt}J(t,X_{t})+r(t,X_{t}),
\end{align*}
which is the the well-known TD Error. When estimating the value function $J$ using $J^{\theta}$ with parameter $\theta$, the TD Error becomes  
\[\delta_{t}^{\theta}:=\frac{\partial}{\partial t}J^{\theta}(t,X_{t})+r(t,X_{t}).\]
Building upon the above TD Error, \cite{Doya:2000} introduced the mean-square TD error (MSTDE):
\[\text{MSTDE}(\theta):=\frac{1}{2}\int_{0}^{T}\left|\delta_{t}^{\theta}\right|^{2}dt=\frac{1}{2}\int_{0}^{T}\left|\frac{\partial}{\partial t}J^{\theta}(t,X_{t})+r(t,X_{t})\right|^{2}dt.\]
The optimal $\theta$ is determined by minimizing the above MSTDE, since if $J^{\theta}$ is a perfect estimator of the true value function $J$, then $\delta_{t}^{\theta}$ would be zero. To solve the RL problem using the observed discretized data $X_{t_{i}}$ and $r(t_{i},X_{t_{i}})$, \cite{jiaandzhou2022} defined the $\text{MSTDE}_{\Delta t}(\theta)$ which is an approximation of $\text{MSTDE}(\theta)$:
\[\text{MSTDE}_{\Delta t}(\theta):=\frac{1}{2}\sum\limits_{i=0}^{n-1}\left[\frac{J^{\theta}(t_{i+1},X_{t_{i+1}})-J^{\theta}(t_{i},X_{t_{i}})}{t_{i+1}-t_{i}}+r(t_{i},X_{t_{i}})\right]^{2}\Delta t,\]
and proposed an algorithm to update $\theta$ by minimizing the $\text{MSTDE}_{\Delta t}(\theta)$ using the gradient descent method.

\subsection{Stochastic Setting with Continuous Dynamics} \label{sec3.2}

The state $X_{t}$ follows a continuous SDE:
\begin{align}\label{sde}
    dX_{t}=b(t,X_{t})dt+\sigma(t,X_{t})dW_{t}.
\end{align}
We first recall the findings of \cite{elkaroui1997}, the PDE in (\ref{pde}) is connected to the following forward-backward stochastic differential equation (FBSDE):
\begin{align}
\left\{\begin{array}{ll}\label{fbsde}
     dX_{s}=b(s,X_{s})ds+\sigma(s,X_{s})dW_{s},\ s\in[t,T];\ X_{t}=x, \\
     dY_{s}=-r(s,X_{s})ds+Z_{s}dW_{s},\ s\in[t,T];\ Y_{T}=h(X_{T}).
\end{array}\right.\end{align}
The value function $J(s,X_{s})$ and the solution $\{(X_{s},Y_{s},Z_{s}),\ t\leq s\leq T\}$ of the above FBSDE have the following relationship:
\begin{align}
\left\{\begin{array}{ll}\label{sol}
     Y_{s}=J(s,X_{s}),\ s\in[t,T], \\
     Z_{s}=\frac{\partial J}{\partial x}(s,X_{s})^{\dagger}\sigma(s,X_{s}),\ s\in[t,T].
\end{array}\right.\end{align}
For any fixed $(t,x)\in[0,T]\times\mathbb{R}^{d}$ and $\{X_{s},\ t\leq s\leq T\}$ solving the first equation of (\ref{fbsde}), define
\begin{align}\label{mart}
    M_{s}:=J(s,X_{s})+\int_{t}^{s}r(u,X_{u})du,\ s\in [t,T].
\end{align}
By combining (\ref{fbsde}) and (\ref{sol}), we observe the following result:
\begin{align*}
    dM_{s}&=dJ(s,X_{s})+r(s,X_{s})ds\\
    & = dY_{s}+r(s,X_{s})ds\\
    & = Z_{s}dW_{s}.
\end{align*}
Assumption \ref{ass1} guarantees that $E[\int_{0}^{T}Z_{s}^{2}ds]<\infty$, which implies $M_{s}$ is a martingale. The martingale property implies that for any $\Delta t>0$, 
\begin{align*}
    E[M_{t+\Delta t}|\mathcal{F}_{t}]=M_{t},
\end{align*}
and by plugging in the formula for $M$, we have 
\begin{align*}
    E\left[J(t+\Delta t,X_{t+\Delta t})+\int_{t}^{t+\Delta t}r(s,X_{s})ds\Big|\mathcal{F}_{t}\right]=J(t,X_{t}).
\end{align*}
Similar to (\ref{tde}), we obtain 
\[E\left[\frac{J(t+\Delta t,X_{t+\Delta t})-J(t,X_{t})}{\Delta t}+\frac{1}{\Delta t}\int_{t}^{t+\Delta t}r(s,X_{s})ds\right]=0.\]
As $\Delta t$ goes to zero, $\frac{d}{dt}J(t,X_{t})$ does not exist due to the non-differentiability of Brownian motion. Although the MSTDE does not theoretically exist in this setting, we can still define $\text{MSTDE}_{\Delta t}(\theta)$ similar to the deterministic case as follows:
\[\text{MSTDE}_{\Delta t}(\theta):=\frac{1}{2}E\left[\sum\limits_{i=0}^{n-1}\left(\frac{J^{\theta}(t_{i+1},X_{t_{i+1}})-J^{\theta}(t_{i},X_{t_{i}})}{t_{i+1}-t_{i}}+r(t_{i},X_{t_{i}})\right)^{2}\Delta t\right].\]
Similar to (\ref{mart}), we can define 
\[M^{\theta}_{s}:=J^{\theta}(s,X_{s})+\int_{t}^{s}r(u,X_{u})du.\]
Then, we have
\begin{align*}
    \text{MSTDE}_{\Delta t}(\theta)&=\frac{1}{2\Delta t}E\left[\sum\limits_{i=0}^{n-1}(J^{\theta}(t_{i+1},X_{t_{i+1}})-J^{\theta}(t_{i},X_{t_{i}})+r(t_{i},X_{t_{i}})\Delta t)^{2}\right]\\
    & \approx \frac{1}{2\Delta t}E\left[\sum\limits_{i=0}^{n-1}(J^{\theta}(t_{i+1},X_{t_{i+1}})-J^{\theta}(t_{i},X_{t_{i}})+\int_{t_{i}}^{t_{i+1}}r(s,X_{s})ds)^{2}\right]\\
    & = \frac{1}{2\Delta t}E\left[\sum\limits_{i=0}^{n-1}(M^{\theta}_{t_{i+1}}-M^{\theta}_{t_{i}})^{2}\right]\\
    & \rightarrow\frac{1}{2\Delta t}E\langle M^{\theta}\rangle_{T}.
\end{align*}
The above limit holds as the size of the partition goes to 0. This provides the intuition that minimizing the $\text{MSTDE}_{\Delta t}(\theta)$ will guide the optimal parameter $\theta^{*}$ towards minimizing the expected quadratic variation of $M^{\theta}$. By It\^o's formula, we have 
\begin{align*}
    dM_{t}^{\theta}=\left[\mathcal{L}J^{\theta}(t,X_{t})+r(t,X_{t})\right]dt+\left(\frac{\partial J^{\theta}(t,X_{t})}{\partial x}\right)^{\dagger}\sigma(t,X_{t})dW_{t}.
\end{align*}
Hence, we have
\begin{align*}
    d\langle M^{\theta}\rangle_{t}&\equiv(dM_{t}^{\theta})^{2}=\left\{\left[\mathcal{L}J^{\theta}(t,X_{t})+r(t,X_{t})\right]dt+\left(\frac{\partial J^{\theta}(t,X_{t})}{\partial x}\right)^{\dagger}\sigma(t,X_{t})dW_{t}\right\}^{2}\\
    & = \left[\left(\frac{\partial J^{\theta}(t,X_{t})}{\partial x}\right)^{\dagger}\sigma(t,X_{t})\right]^{2}(dW_{t})^{2}+\left[\mathcal{L}J^{\theta}(t,X_{t})+r(t,X_{t})\right]^{2}(dt)^{2}\\
    &\quad +2\left[\mathcal{L}J^{\theta}(t,X_{t})+r(t,X_{t})\right]dt\cdot\left(\frac{\partial J^{\theta}(t,X_{t})}{\partial x}\right)^{\dagger}\sigma(t,X_{t})dW_{t}\\
    & = \left[\left(\frac{\partial J^{\theta}(t,X_{t})}{\partial x}\right)^{\dagger}\sigma(t,X_{t})\right]^{2}dt+\text{ high-order small term},
\end{align*}
where the last line is due to $(dW_{t})^{2}=dt$ a.s. and the fact that the terms involving $(dt)^{2}$ and $dW_{t}dt$ are of higher order than $dt$. Therefore, minimizing the expected quadratic variation of $M^{\theta}$ is equivalent to minimizing $E\left[\int_{0}^{T}\left|\left(\frac{\partial J^{\theta}(t,X_{t})}{\partial x}\right)^{\dagger}\sigma(t,X_{t})\right|^{2}dt\right]$. Since the reward function $r(t_{i},X_{t_{i}})$ actually does not contribute to the quadratic variation of $M^{\theta}$ as seen before, we use the following error term instead of $\text{MSTDE}_{\Delta t}(\theta)$:
\begin{align}\label{err}
    \text{MSTDE}_{\Delta t}^{*}(\theta):=E\left[\sum\limits_{i=0}^{n-1}\left(J^{\theta}(t_{i+1},X_{t_{i+1}})-J^{\theta}(t_{i},X_{t_{i}})\right)^{2}\right].
\end{align}

 \cite{jiaandzhou2022} proved the following result (Theorem 2 in \cite{jiaandzhou2022}):

 Suppose the state $X_{t}$ follows the equation (\ref{sde}) and Assumptions \ref{ass1}-\ref{ass3} hold. Let $\theta_{\text{MSTDE}^{*}}^{\star}(\Delta t)\in\arg\min\limits_{\theta\in\Theta}\text{MSTDE}_{\Delta t}^{*}(\theta)$, and assume that $\theta_{\text{MSTDE}^{*}}^{\star}:=\lim_{\Delta t\rightarrow 0}\theta_{\text{MSTDE}^{*}}^{\star}(\Delta t)$ exists. Then, we have 
\[\theta_{\text{MSTDE}^{*}}^{\star}\in\arg\min\limits_{\theta\in\Theta}E\left[\int_{0}^{T}\left|\left(\frac{\partial J^{\theta}(t,X_{t})}{\partial x}\right)^{\dagger}\sigma(t,X_{t})\right|^{2}dt\right].\]

\subsection{Stochastic Setting with Jump Noise}
 In practice, jumps are frequently observed and are typically triggered by unexpected news or events. These jumps exhibit a fundamentally different nature from the continuous fluctuations captured by continuous diffusion processes. As a result, the presence of jumps complicates the task of learning the dynamics driven by the latent continuous diffusion process in (\ref{sde}). To address this, we assume that the true underlying state evolves according to a diffusion process without jumps, while the observed data are contaminated by an additive jump component as in  \eqref{sde_jump}. Our objective is to develop a robust algorithm that can effectively account for and mitigate the impact of jump noise in the observed process.
  For example, when $X_{t}$ represents a stock price, we want to learn dynamics from the rate of the return, represented by $\int_{0}^{T}b(t,X_{t})dt$, and from the random fluctuations due to the market volatility, represented by $\int_{0}^{T}\sigma(t,X_{t})dW_{t}$, rather than to the jump components, represented by $\int_{0}^{T}L(t,X_{t})dN_{t}$. 
However,  in the presence of jumps, the MSTDE algorithm converges to the expected quadratic variation of $M^{\theta}$, which differs from the target quantity of interest.
In the following theorem, we formally characterize the bias introduced by the MSTDE algorithm under jump contamination.

\begin{theorem}\label{thm1}
 Suppose the state $X_{t}$ follows the equation (\ref{sde_jump}) and Assumptions \ref{ass1}--\ref{ass3} hold. Let 
\[\theta_{\text{MSTDE}^{*}}^{\star}(\Delta t)\in\arg\min\limits_{\theta\in\Theta}\text{MSTDE}_{\Delta t}^{*}(\theta),\]
and assume that $\theta_{\text{MSTDE}^{*}}^{\star}:=\lim_{\Delta t\rightarrow 0}\theta_{\text{MSTDE}^{*}}^{\star}(\Delta t)$ exists. 
Then, we have
\begin{align}\label{mstde_conv}
\theta_{\text{MSTDE}^{*}}^{\star}\in\arg\min\limits_{\theta\in\Theta} &E\Bigg[\int_{0}^{T}\left|\left(\frac{\partial J^{\theta}(t,X_{t-})}{\partial x}\right)^{\dagger}\sigma(t,X_{t-})\right|^{2}dt \cr
	&\qquad \qquad \qquad   +\int_{0}^{T}\left[J^{\theta}(t,X_{t})-J^{\theta}(t,X_{t-})\right]^{2}dN_{t}\Bigg],
\end{align}
where $X_{t-}:=\lim_{\Delta t\searrow 0}X_{t-\Delta t}$ is the left limit of $X_{t}$.
\end{theorem}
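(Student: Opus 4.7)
The plan is to extend the argument of Section~\ref{sec3.2} to the jump-diffusion dynamics (\ref{sde_jump}). Following the same reduction, I would introduce $M^\theta_s := J^\theta(s,X_s) + \int_0^s r(u,X_u)\,du$ and observe that, up to the lower-order reward contribution,
\[\text{MSTDE}^*_{\Delta t}(\theta) \;\approx\; E\left[\sum_{i=0}^{n-1}(M^\theta_{t_{i+1}}-M^\theta_{t_i})^2\right],\]
which as $\Delta t \to 0$ converges to $E\bigl[[M^\theta]_T\bigr]$, the expected (optional) quadratic variation of $M^\theta$. The only new ingredient compared to the continuous case is the identification of $[M^\theta]_T$ under jump dynamics.

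I would then apply It\^o's formula for semimartingales with jumps to $J^\theta(t,X_t)$ under (\ref{sde_jump}), which yields
\[dM^\theta_t = \bigl[\mathcal{L}J^\theta(t,X_{t-}) + r(t,X_{t-})\bigr]dt + \Bigl(\frac{\partial J^\theta}{\partial x}(t,X_{t-})\Bigr)^{\dagger}\! \sigma(t,X_{t-})\,dW_t + \bigl[J^\theta(t,X_t) - J^\theta(t,X_{t-})\bigr]dN_t.\]
The $dt$-term is of finite variation and contributes nothing to $[M^\theta]$, while the continuous martingale part and the purely discontinuous part are orthogonal because Assumption~\ref{ass1}(v) prevents $W$ and $N$ from jumping simultaneously. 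Hence
\[[M^\theta]_T = \int_0^T \Bigl|\Bigl(\frac{\partial J^\theta}{\partial x}(t,X_{t-})\Bigr)^{\dagger}\sigma(t,X_{t-})\Bigr|^2 dt + \int_0^T \bigl[J^\theta(t,X_t) - J^\theta(t,X_{t-})\bigr]^2 dN_t.\]
Taking expectations produces the objective on the right-hand side of (\ref{mstde_conv}), and continuity of this objective in $\theta$, guaranteed by Assumption~\ref{ass3}, then permits exchanging $\arg\min$ with the limit.

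The main obstacle is the rigorous passage to the limit, which has two delicate components. First, one has to show that the reward-integral remainder $\sum_i (r(t_i,X_{t_i})\Delta t)^2 = O(\Delta t)$ and the cross term $\sum_i (J^\theta(t_{i+1},X_{t_{i+1}})-J^\theta(t_i,X_{t_i}))\,r(t_i,X_{t_i})\,\Delta t$ vanish uniformly in $\theta \in \Theta$; this follows from Cauchy--Schwarz together with the $L^2$-bounds in Assumption~\ref{ass3} and the polynomial growth in Assumption~\ref{ass1}(iv). Second, one must establish that the discrete squared-increment sum converges to $[M^\theta]_T$ in $L^1$ uniformly in $\theta$, and in particular the jump part of this convergence is the subtle piece: it requires that with probability tending to one as $\Delta t \to 0$, each jump time of $N$ eventually lies in a distinct subinterval $[t_i,t_{i+1}]$ and that no subinterval contains more than one jump, which is precisely what Assumption~\ref{ass1}(v) is designed to give.
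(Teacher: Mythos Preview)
Your strategy is essentially correct, but it differs from the paper's execution and contains two minor confusions. First, $\text{MSTDE}^*_{\Delta t}$ in (\ref{err}) is defined \emph{without} the reward term, so introducing $M^\theta$ with $\int_0^s r\,du$ and the attendant ``reward-integral remainder'' analysis is unnecessary---you can work directly with $J^\theta(s,X_s)$. Second, the orthogonality of the continuous and pure-jump parts of $[M^\theta]$ is automatic (Brownian motion has no jumps); Assumption~\ref{ass1}(v) is not needed for that, but rather to guarantee at most one jump per subinterval.

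The substantive methodological difference is how the limit is handled. You propose to establish $L^1$ convergence of $\sum_i(\Delta_i J^\theta)^2$ to $[M^\theta]_T$ uniformly in $\theta$, and you correctly flag this as the delicate step. The paper sidesteps it entirely: after applying It\^o and expanding the square, two of the resulting terms equal the target \emph{exactly} for every $\Delta t$---It\^o's isometry handles the Brownian piece, and Assumption~\ref{ass1}(v) yields the pointwise identity $\bigl(\int_{t_i}^{t_{i+1}}[J^\theta(t,X_t)-J^\theta(t,X_{t-})]\,dN_t\bigr)^2 = \int_{t_i}^{t_{i+1}}[J^\theta(t,X_t)-J^\theta(t,X_{t-})]^2\,dN_t$ once $\Delta t$ is small enough. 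What remains is an explicit remainder $R_1(\theta)+R_2(\theta)$, bounded via H\"older and Burkholder--Davis--Gundy by powers of $\Delta t$ times $L^2$-norms that Assumption~\ref{ass3} makes continuous in $\theta$; uniform vanishing on compact sets is then immediate, and Lemma~\ref{lemma3} transfers the argmin. Your route would also work, but justifying uniform $L^1$ convergence to the quadratic variation would in practice force you through the same explicit remainder estimates the paper carries out directly.
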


This result reflects the setting where we only observe the jump-contaminated process $X_t$, as given by equation \eqref{sde_jump}, rather than the latent continuous path.
Theorem \ref{thm1} reveals that the MSTDE algorithm suffers from two sources of bias: the jump variation term,  $\int_{0}^{T}\left[J^{\theta}(t,X_{t})-J^{\theta}(t,X_{t-})\right]^{2}dN_{t}$, and the distortion in the gradient evaluation due to discontinuities in the observed process, $\frac{\partial J^{\theta}(t,X_{t-})}{\partial x}$.

 To address the issue posed by the presence of jump components in the state dynamics, we introduce the Mean-Square Bipower Variation Error (MSBVE) as an alternative error term. The MSBVE is defined as follows:
\begin{align}\label{msbve_err}
    \text{MSBVE}_{\Delta t}(\theta):=E\left[\sum\limits_{i=1}^{n-1}\left|J^{\theta}(t_{i+1},X_{t_{i+1}})-J^{\theta}(t_{i},X_{t_{i}})\right|\left|J^{\theta}(t_{i},X_{t_{i}})-J^{\theta}(t_{i-1},X_{t_{i-1}})\right|\right].
\end{align}
Unlike MSTDE, which considers the squared difference between consecutive value function estimates, the MSBVE approach utilizes the product of non-overlapping absolute differences. Since jumps are rare events, there is at most one jump within a short time interval. By leveraging this property, the use of non-overlapping absolute differences helps eliminate the influence of jump components. As a result, MSBVE is capable of mitigating the impact of jump noise in the observed process. The following theorem formally demonstrates the advantage of the MSBVE approach in jump-contaminated environments.

\begin{theorem}\label{thm2}
Suppose the state $X_{t}$ follows the equation (\ref{sde_jump}) and Assumptions \ref{ass1}--\ref{ass3} hold. Let 
\[\theta_{\text{MSBVE}}^{\star}(\Delta t)\in\arg\min\limits_{\theta\in\Theta}\text{MSBVE}_{\Delta t}(\theta),\]
and assume that $\theta_{\text{MSBVE}}^{\star}:=\lim_{\Delta t\rightarrow 0}\theta_{\text{MSBVE}}^{\star}(\Delta t)$ exists. Then, we have
\[\theta_{\text{MSBVE}}^{\star}\in\arg\min\limits_{\theta\in\Theta}E\left[\int_{0}^{T}\left|\left(\frac{\partial J^{\theta}(t,X_{t-})}{\partial x}\right)^{\dagger}\sigma(t,X_{t-})\right|^{2}dt\right].\]
\end{theorem}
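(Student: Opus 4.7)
The plan is to establish, as $\Delta t \to 0$,
\[
\text{MSBVE}_{\Delta t}(\theta) \;\longrightarrow\; \frac{2}{\pi}\, E\!\left[\int_{0}^{T} \left|\left(\frac{\partial J^{\theta}}{\partial x}(t,X_{t-})\right)^{\dagger}\sigma(t,X_{t-})\right|^{2} dt\right],
\]
together with enough regularity in $\theta$ to commute $\arg\min$ with this limit. Since the right-hand side differs from the theorem's target only by the positive constant $2/\pi$, the conclusion follows. The argument adapts the bipower variation theory of Barndorff--Nielsen and Shephard, in which the product of non-overlapping absolute increments annihilates jump contributions.

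First, I would apply Ito's formula for the jump-diffusion \eqref{sde_jump} to write each increment as
\[
\Delta_{i}^{\theta} := J^{\theta}(t_{i+1},X_{t_{i+1}}) - J^{\theta}(t_{i},X_{t_{i}}) = D_{i}^{\theta} + S_{i}^{\theta} + J_{i}^{\theta},
\]
where $D_{i}^{\theta} = \int_{t_i}^{t_{i+1}} \mathcal{L}J^{\theta}(s,X_{s-})\,ds = O_{p}(\Delta t)$ is the drift piece, $S_{i}^{\theta} = \int_{t_i}^{t_{i+1}} (\partial_x J^{\theta}(s,X_{s-}))^{\dagger}\sigma(s,X_{s-})\,dW_s = O_{p}(\sqrt{\Delta t})$ is the Brownian martingale piece, and $J_{i}^{\theta} = \sum_{t_i < s \le t_{i+1}} [J^{\theta}(s,X_s) - J^{\theta}(s,X_{s-})]$ collects jump contributions. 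I would then split the MSBVE sum into a \emph{diffusive} part, over indices $i$ for which neither $(t_{i-1},t_i]$ nor $(t_i,t_{i+1}]$ contains a jump, and a \emph{jump-contaminated} remainder.

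On the diffusive part, $J_{i-1}^{\theta}=J_{i}^{\theta}=0$ and the drifts are lower order, so the dominant contribution is $E\,|S_{i-1}^{\theta}||S_{i}^{\theta}|$. Conditioning on $\mathcal{F}_{t_{i-1}}$ and invoking a standard freezing approximation, in which $\sigma$ and $\partial_{x}J^{\theta}$ are replaced by their values at $t_{i-1}$ over the two short intervals (justified by path continuity of $X$ between jumps and Assumption \ref{ass3}), the two factors become conditionally independent centered Gaussians, each with variance $|(\partial_{x}J^{\theta})^{\dagger}\sigma|^{2}_{t_{i-1}}\Delta t$. Using $E|Z|=\sqrt{2/\pi}$ for $Z\sim N(0,1)$, one obtains
\[
E\!\left[|S_{i-1}^{\theta}||S_{i}^{\theta}|\right] = \tfrac{2}{\pi}\, E\!\left[\left|(\partial_{x} J^{\theta}(t_{i-1},X_{t_{i-1}}))^{\dagger}\sigma(t_{i-1},X_{t_{i-1}})\right|^{2}\right]\Delta t + o(\Delta t),
\]
and summing yields a Riemann sum that, by Assumption \ref{ass3}, converges to $\tfrac{2}{\pi}\, E\int_{0}^{T}|(\partial_{x} J^{\theta})^{\dagger}\sigma|^{2}\,dt$. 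For the jump-contaminated remainder, Assumption \ref{ass1}(v) gives $N_{T}<\infty$ almost surely, so with probability one only finitely many intervals carry a jump and, for all sufficiently small $\Delta t$, no two jumps lie in adjacent intervals; a single jump in $(t_{i-1},t_{i}]$ makes $|\Delta_{i-1}^{\theta}|=O_{p}(1)$ while $|\Delta_{i}^{\theta}|$ retains diffusive magnitude $O_{p}(\sqrt{\Delta t})$, so each contaminated term is $O_{p}(\sqrt{\Delta t})$ and the overall pathwise contribution is $O_{p}(\sqrt{\Delta t})\to 0$.

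Upgrading this pathwise decay to convergence in expectation requires uniform integrability, which I would secure from the polynomial-growth conditions in Assumption \ref{ass1}(iv) and the $L^{2}$-integrability in Assumption \ref{ass3} via Burkholder--Davis--Gundy bounds on $E|\Delta_{i}^{\theta}|^{p}$. Once the two parts are combined, continuity of both sides in $\theta$ (Assumption \ref{ass3}) on the compact $\Theta$ lets a standard continuity-of-argmin argument transfer the minimizer through the limit, giving the stated result. The main obstacle is the coupling of adjacent increments intrinsic to the MSBVE: unlike MSTDE$^{*}$, one cannot reduce the analysis to a single-interval Ito isometry, so the freezing step for $\sigma$ and $\partial_{x}J^{\theta}$ over consecutive short intervals, and the accompanying $L^{p}$ control needed to convert pathwise $o_{p}(1)$ bounds into $o(1)$ bounds in expectation, is where most of the technical effort will lie.
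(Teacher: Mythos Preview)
Your overall strategy is sound and lands on the same $\tfrac{2}{\pi}$ limit, but the paper organizes the two main pieces differently, and the differences are worth flagging.

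For the continuous (diffusive) contribution, the paper does not carry out the freezing-plus-conditional-Gaussian computation you sketch. Instead it writes $\text{MSBVE}_{\Delta t}(\theta)=BV(\theta)+\tilde R_1(\theta)+\tilde R_2(\theta)$, where $\tilde R_1$ is the bipower sum over the drift-plus-martingale increments $a_i=\int_{t_i}^{t_{i+1}}\mathcal L J^\theta\,dt+\int_{t_i}^{t_{i+1}}(\partial_x J^\theta)^\dagger\sigma\,dW$ minus $BV(\theta)$, and simply invokes the Barndorff-Nielsen--Shephard bipower theorem (stated as a lemma) to get $\sum_i|a_i||a_{i-1}|\to \tfrac{2}{\pi}\int_0^T|(\partial_x J^\theta)^\dagger\sigma|^2dt$ in probability. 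The upgrade to $L^1$ is then done separately by bounding $\sup_n E|Y_n|^2$ via H\"older and BDG. This sidesteps the freezing step entirely; your route effectively re-proves the Barndorff-Nielsen result, which, as you note, is where the real work in your approach concentrates.

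For the jump contribution, your random partition into jump-free and jump-contaminated index sets is replaced in the paper by a purely \emph{algebraic} device: the elementary inequality
\[
\bigl|\,|a+b|\,|c+d|-|a|\,|c|\,\bigr|\le |a|\,|d|+|b|\,|c|\qquad\text{whenever }bd=0,
\]
applied term by term (Assumption~\ref{ass1}(v) forces $J_{i}^\theta J_{i-1}^\theta=0$ for small $\Delta t$). This turns $\tilde R_2$ into a deterministic sum of cross terms of the form $E[|J_i^\theta|\cdot(|D_{i\pm1}^\theta|+|S_{i\pm1}^\theta|)]$, which are then dispatched directly in expectation by H\"older/BDG bounds driven by $P(N_{t_{i+1}}-N_{t_i}=1)\to 0$. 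The payoff is that one never has to track a random index set or convert pathwise $O_p(\sqrt{\Delta t})$ statements into $L^1$ convergence for the jump piece; everything is handled in expectation from the outset. Both routes finish via the same continuity-of-argmin lemma, so your proposal would go through, but the paper's algebraic split and black-box use of bipower asymptotics yield a shorter and more self-contained argument.
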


Theorem \ref{thm2} shows that the proposed MSBVE estimator $\theta_{\text{MSBVE}}^{\star}$ is free from the bias associated with the jump variation term,  $\int_{0}^{T}\left[J^{\theta}(t,X_{t})-J^{\theta}(t,X_{t-})\right]^{2}dN_{t}$. 
In other words, compared to the MSTDE approach, the MSBVE approach exhibits greater robustness to the presence of jump components.
The effectiveness of MSBVE will be further illustrated through numerical studies presented in Sections \ref{simulation} and \ref{application}.
In the next section, we analyze the remaining bias arising from the gradient term of the value function, $\frac{\partial J^{\theta}(t,X_{t-})}{\partial x}$.

\subsection{Bias Analysis}\label{sec3.4}
We can decompose the observed jump-diffusion process $X_{t}$ as follows:
\[dX_{t}=dX_{t}^{C}+dX_{t}^{J},\]
where $dX_{t}^{C}=b(t,X_{t})dt+\sigma(t,X_{t})dW_{t}$ and $dX_{t}^{J}=L(t,X_{t})dN_{t}$. That is, $X_{t}^{C}$ and $X_{t}^{J}$ represent the continuous and jump components, respectively. 
Based on the latent continuous diffusion process, as discussed in Section \ref{sec3.2}, the oracle estimator
is defined as follows:
\[\theta_{\text{Oracle}}^{\star}\in\arg\min\limits_{\theta\in\Theta}E\left[\int_{0}^{T}\left|\left(\frac{\partial J^{\theta}(t,X_{t}^{C})}{\partial x}\right)^{\dagger}\sigma(t,X_{t-})\right|^{2}dt\right].\]
The proposed MSBVE estimation procedure still has a bias compared with the
oracle estimator. The bias is coming from the difference between $J^{\theta}(t,X_{t-})$ and $J^{\theta}(t,X_{t}^{C})$. More specifically, for the general vector case, by Taylor expansion, we have  
\[
\nabla_x J^\theta(t, X_t)^\top \nabla_x J^\theta(t, X_t) = \nabla_x J^\theta(t, X_t^C)^\top \nabla_x J^\theta(t, X_t^C) + 2 \nabla_x J^\theta(t, X_t^C)^\top H_x J^\theta(t, X_t^C) X_t^J 
\]  
\[
+ \left( \text{tr}(H_x J^\theta(t, X_t^C) H_x J^\theta(t, X_t^C)) + \nabla_x J^\theta(t, X_t^C)^\top \nabla_x (\text{tr}(H_x J^\theta(t, X_t^C))) \right) \| X_t^J \|^2 + \cdots,
\]  
where, for any differentiable function \( g: \mathbb{R}^d \to \mathbb{R} \),  
\[
\nabla_x g(t, x_0) = \frac{\partial g(t, x)}{\partial x} \Bigg|_{x = x_0}
\]  
is the gradient, and  
\[
H_x g(t, x_0) = \frac{\partial^2 g(t, x)}{\partial x \partial x^\top} \Bigg|_{x = x_0}
\]  
is the Hessian matrix. 
Then, the bias comes from the series:  
\[
\text{bias}(\theta):=\sum_{i=1}^{\infty} \frac{f^{(i)}(X_t^C)}{i!} (X_t^J)^{\otimes i},
\]  
where \( f^{(i)}(X_t^C) \) represents the \( i \)th derivative tensor of \( \nabla_x J^\theta(t, X_t^C)^\top \nabla_x J^\theta(t, X_t^C) \) with respect to \( x \), and \( (X_t^J)^{\otimes i} \) denotes the \( i \)th order tensor product of \( X_t^J \).  
Thus, the solution of the MSBVE procedure is 
\[\theta_{\text{MSBVE}}^{\star}\in\arg\min\limits_{\theta\in\Theta}E\left[\int_{0}^{T}\left|\left(\frac{\partial J^{\theta}(t,X_{t}^{C})}{\partial x}\right)^{\dagger}\sigma(t,X_{t-})\right|^{2}dt\right]+\text{bias}(\theta),\]
where the bias is a multivariate Taylor’s expansion. From this result, we can find that
when the value function is linear in $x$, the bias term bias($\theta$) disappears. Generally, when
the value function is polynomial in $x$, higher-order terms remain. We will examine how this bias behaves in the following simulation and application sections.

\section{Simulation Study} \label{simulation}
In our simulation setup, we consider a state process $X_{t}$, defined on the interval $t\in[0,1]$, which evolves according to the SDE:
$$dX_{t}=dW_{t}+X_{t-}dN_{t},$$
where $W_{t}$ is a standard Brownian Motion and $N_{t}$ is a Poisson process independent of $W_{t}$. We initialize the process with $X_{0}=0.1$. We only select those Poisson process paths where exactly one jump occurs in $[0,1]$, which implies the jump occurs uniformly within the interval $[0,1]$. Under this setting, we have
\begin{align*}
    X_{t}-X_{t-}=\begin{cases}
    X_{t-}, &\text{if }\Delta N_{t}=1,\\
    0, &\text{if }\Delta N_{t}=0.
    \end{cases}
\end{align*}
Note that under this model, the jump size at time $t$ is equal to $X_{t-}$, meaning the process doubles at the jump time: $X_t = 2 X_{t-}$. Prior to the jump, however,  $X_{t}$ evolves a Brownian Motion starting from $X_{0}=0.1$. We compare the performance of the algorithms under three different forms of the value function:
\begin{itemize}
    \item [(i)] Linear:\[
J^{\theta}(t,x) = (\theta(1-t) + 1)x;
\]
    \item [(ii)] Quadratic:\[
J^{\theta}(t,x) = \theta(1-t)x^2+x;
\]
    \item [(iii)] Exponential:\[
J^{\theta}(t,x) = \theta(1-t)e^x+x.
\]
\end{itemize}

For the MSBVE algorithm, we utilize stochastic gradient descent (SGD) to update the parameter $\theta$ using the error term defined in (\ref{msbve_err}):
\begin{align*}
\text{MSBVE}_{\Delta t}(\theta):=E\Big[\sum\limits_{i=1}^{n-1} |J^{\theta}_{i+1}-J^{\theta}_{i}|\cdot|J^{\theta}_{i}-J^{\theta}_{i-1}|\Big],
\end{align*}
where 
\[J^{\theta}_{i}=J^{\theta}(t_{i},X_{t_{i}}).\]
The update rule for $\theta$ following from SGD is 
\begin{align*}
    \theta\leftarrow\theta-\alpha\times\sum\limits_{i=1}^{n-1} & \Big[\frac{\partial}{\partial\theta}(J^{\theta}_{i+1}-J^{\theta}_{i})|J^{\theta}_{i}-J^{\theta}_{i-1}|\text{sgn}(J^{\theta}_{i+1}-J^{\theta}_{i})\\
    & +\frac{\partial}{\partial\theta}(J^{\theta}_{i}-J^{\theta}_{i-1})|J^{\theta}_{i+1}-J^{\theta}_{i}|\text{sgn}(J^{\theta}_{i}-J^{\theta}_{i-1})\Big].
\end{align*}
For the MSTDE algorithm, the error term in (\ref{err}) becomes
\[\text{MSTDE}_{\Delta t}^{*}(\theta):=E\left[\sum\limits_{i=0}^{n-1}\left(J^{\theta}_{i+1}-J^{\theta}_{i}\right)^{2}\right],\]
and the update rule for $\theta$ follows from SGD becomes 
\begin{align*}
    \theta\leftarrow\theta-2\alpha\times\sum\limits_{i=0}^{n-1}\left(J^{\theta}_{i+1}-J^{\theta}_{i}\right)\cdot\frac{\partial}{\partial\theta}\left(J^{\theta}_{i+1}-J^{\theta}_{i}\right).
\end{align*}

\subsection{Linear Value Function}
When $J^{\theta}(t,x) = (\theta(1-t) + 1)x$,
based on the discussion of Theorem \ref{thm2}, the theoretical minimizer of MSBVE is
\begin{align*}
    &\theta_{\text{MSBVE}}^{*}  \in\arg\min_{\theta\in\Theta}E\left[\int_{0}^{1}\left|\left(\frac{\partial J^{\theta}(t,X_{t-})}{\partial x}\right)^{\dagger}\sigma(t,X_{t-})\right|^{2}dt\right]\\
    & = \arg\min_{\theta\in\Theta}\int_{0}^{1}\left[(\theta(1-t)+1)\right]^{2}dt\\
    & = \arg\min_{\theta\in\Theta}\left(\frac{1}{3}\theta^{2}+\theta+1\right)\\
    & = -\frac{3}{2},
\end{align*}
and based on the discussion of Theorem \ref{thm1}, the theoretical minimizer of MSTDE is
\begin{align*}
    &\theta_{\text{MSTDE}}^{*}  \in\arg\min_{\theta\in\Theta}E\left[\int_{0}^{1}\left|\left(\frac{\partial J^{\theta}(t,X_{t-})}{\partial x}\right)^{\dagger}\sigma(t,X_{t-})\right|^{2}dt+\int_{0}^{1}\left[J^{\theta}(t,X_{t})-J^{\theta}(t,X_{t-})\right]^{2}dN_{t}\right]\\
    & = \arg\min_{\theta\in\Theta}E\left[\int_{0}^{1}\left[(\theta(1-t)+1)\right]^{2}dt+\int_{0}^{1}\left[(\theta(1-t)+1)(X_{t}-X_{t-})\right]^{2}dN_{t}\right]\\
    & = \arg\min_{\theta\in\Theta}\left\{\frac{1}{3}\theta^{2}+\theta+1\right.\\
    &\left. \qquad \qquad +E\left[E\left(\int_{0}^{1}\left[(\theta(1-t)+1)(X_{t}-X_{t-})\right]^{2}dN_{t}\bigg |\text{ the jump happens at time } u\right)\right]\right\}\\
    & = \arg\min_{\theta\in\Theta}\left[\frac{1}{3}\theta^{2}+\theta+1 + E_{U\sim\text{Unif}[0,1]}\left((\theta(1-U)+1)^{2}E(X_{U-}^{2}|\text{ the jump happens at time }u)\right)\right]\\
    & = \arg\min_{\theta\in\Theta}\Bigg[\frac{1}{3}\theta^{2}+\theta+1\\
&\qquad \qquad    +E_{U\sim\text{Unif}[0,1]}\left((\theta(1-U)+1)^{2}E((W_{U}+0.1)^{2}|\text{ the jump happens at time }u)\right)\Bigg]\\
    & = \arg\min_{\theta\in\Theta}\left[\frac{1}{3}\theta^{2}+\theta+1+E_{U\sim\text{Unif}[0,1]}\left((\theta(1-U)+1)^{2}(U+0.01)\right)\right]\\
    & = \arg\min_{\theta\in\Theta}\left(\frac{21}{50}\theta^{2}+\frac{403}{300}\theta+\frac{151}{100}\right)\\
    & = -\frac{403}{252}.
\end{align*}
It is worth noting that, since the value function is linear in $x$, the bias term discussed in Section \ref{sec3.4} vanishes. As a result, the MSBVE estimator achieves the oracle solution: $\theta_{\text{Oracle}}^{*}=\theta_{\text{MSBVE}}^{*}=-\frac{3}{2}$.

\subsection{Quadratic Value Function}
When $J^{\theta}(t,x) = \theta(1-t) x^2 + x$,
based on the discussion of Theorem \ref{thm2}, the theoretical minimizer of MSBVE is
\begin{align*}
    &\theta_{\text{MSBVE}}^{*}  \in\arg\min_{\theta\in\Theta}E\left[\int_{0}^{1}\left|\left(\frac{\partial J^{\theta}(t,X_{t-})}{\partial x}\right)^{\dagger}\sigma(t,X_{t-})\right|^{2}dt\right]\\
    & = \arg\min_{\theta\in\Theta}E\left[\int_{0}^{1}\left[2\theta(1-t)X_{t-}+1\right]^{2}dt\right]\\
    & = \arg\min_{\theta\in\Theta}E\Bigg[\int_{0}^{u}E\left(2\theta(1-t)(W_{t}+0.1)+1\right)^{2}dt \\
	&\qquad \qquad \qquad \qquad +\int_{u}^{1}E\left(2\theta(1-t)(W_{t}+W_{u}+0.2)+1\right)^{2}dt\Bigg |\text{ the jump happens at time }u\Bigg]\\
    & = \int_{0}^{1}\int_{0}^{u}\left(4\theta^{2}(1-t)^{2}(t+0.01)+0.4\theta(1-t)+1\right)dtdu \\
    &\quad\quad+ \int_{0}^{1}\int_{u}^{1}\left(4\theta^{2}(1-t)^{2}(3u+t+0.04)+0.8\theta(1-t)+1\right)dtdu\\
    & = \arg\min_{\theta\in\Theta}\left(\frac{167}{300}\theta^{2}+\frac{4}{15}\theta+1\right)\\
    & = -\frac{40}{167},
\end{align*}
and based on the discussion of Theorem \ref{thm1}, the theoretical minimizer of MSTDE is
\begin{align*}
    &\theta_{\text{MSTDE}}^{*}  \in\arg\min_{\theta\in\Theta}E\left[\int_{0}^{1}\left|\left(\frac{\partial J^{\theta}(t,X_{t-})}{\partial x}\right)^{\dagger}\sigma(t,X_{t-})\right|^{2}dt+\int_{0}^{1}\left[J^{\theta}(t,X_{t})-J^{\theta}(t,X_{t-})\right]^{2}dN_{t}\right]\\
    & = \arg\min_{\theta\in\Theta}\Bigg[\frac{167}{300}\theta^{2}+\frac{4}{15}\theta+1 \\
& \qquad \qquad +E\left[(3\theta(1-u)(W_{u}+0.1)^{2}+(W_{u}+0.1))^{2}\Bigg|\text{ the jump happens at time }u\right]\Bigg]\\
    & = \arg\min_{\theta\in\Theta}\left[\frac{167}{300}\theta^{2}+\frac{4}{15}\theta+1\right.\\
    &\quad \left.+\int_{0}^{1}\left(9\theta^{2}(1-u)^{2}(3u^{2}+0.06u+0.0001)+6\theta(1-u)(0.3u+0.001)+(u+0.01)\right)du\right]\\
    & = \arg\min_{\theta\in\Theta}\left(\frac{45059}{30000}\theta^{2}+\frac{1709}{3000}\theta+\frac{151}{100}\right)\\
    & = -\frac{8545}{45059}.
\end{align*}
For the latent continuous process, $dX_{t}^{C}=dW_{t}$, therefore, the oracle estimator is 
\begin{align*}
    &\theta_{\text{Oracle}}^{*}  \in\arg\min_{\theta\in\Theta}E\left[\int_{0}^{1}\left|\left(\frac{\partial J^{\theta}(t,X_{t}^{C})}{\partial x}\right)^{\dagger}\sigma(t,X_{t-})\right|^{2}dt\right]\\
    & = \arg\min_{\theta\in\Theta}E\left[\int_{0}^{1}\left[2\theta(1-t)(W_{t}+0.1)+1\right]^{2}dt\right]\\
    & = \int_{0}^{1}\left(4\theta^{2}(1-t)^{2}(t+0.01)+0.4\theta(1-t)+1\right)dt \\
    & = \arg\min_{\theta\in\Theta}\left(\frac{26}{75}\theta^{2}+\frac{1}{5}\theta+1\right)\\
    & = -\frac{15}{52}.
\end{align*}
Unlike the linear value function case, here the value function is nonlinear in $x$, so the bias term discussed in Section \ref{sec3.4} does not vanish. As a result, the oracle solution and the MSBVE estimate differ: $\theta_{\text{Oracle}}^{*} \neq \theta_{\text{MSBVE}}^{*}$.

\subsection{Exponential Value Function}
When $J^{\theta}(t,x) = \theta(1-t)e^x+x$,
based on the discussion of Theorem \ref{thm2}, the theoretical minimizer of MSBVE is
\begin{align*}
    &\theta_{\text{MSBVE}}^{*}  \in\arg\min_{\theta\in\Theta}E\left[\int_{0}^{1}\left|\left(\frac{\partial J^{\theta}(t,X_{t-})}{\partial x}\right)^{\dagger}\sigma(t,X_{t-})\right|^{2}dt\right]\\
    & = \arg\min_{\theta\in\Theta}E\left[\int_{0}^{1}\left[\theta(1-t)e^{X_{t-}}+1\right]^{2}dt\right]\\
    & = \arg\min_{\theta\in\Theta}E\Bigg[\int_{0}^{u}E\left(\theta(1-t)e^{W_{t}+0.1}+1\right)^{2}dt\\
&\qquad \qquad \qquad \qquad +\int_{u}^{1}E\left(\theta(1-t)e^{W_{t}+W_{u}+0.2}+1\right)^{2}dt\Bigg |\text{ the jump happens at time }u\Bigg]\\
    & = \arg\min_{\theta\in\Theta}\left[\int_{0}^{1}\int_{0}^{u}\left(\theta^{2}(1-t)^{2}e^{2t+0.2}+2\theta(1-t)e^{\frac{1}{2}t+0.1}+1\right)dtdu\right. \\
    &\left.\quad\quad+ \int_{0}^{1}\int_{u}^{1}\left(\theta^{2}(1-t)^{2}e^{(6u+2t+0.4)}+2\theta(1-t)e^{\frac{1}{2}(3u+t)+0.2}+1\right)dtdu\right]\\
    & = \arg\min_{\theta\in\Theta}\left[3.190\theta^2+1.657\theta+1\right]\approx-0.260, 
\end{align*}
and based on the discussion of Theorem \ref{thm1}, the theoretical minimizer of MSTDE is
\begin{align*}
    &\theta_{\text{MSTDE}}^{*}  \in\arg\min_{\theta\in\Theta}E\left[\int_{0}^{1}\left|\left(\frac{\partial J^{\theta}(t,X_{t-})}{\partial x}\right)^{\dagger}\sigma(t,X_{t-})\right|^{2}dt+\int_{0}^{1}\left[J^{\theta}(t,X_{t})-J^{\theta}(t,X_{t-})\right]^{2}dN_{t}\right]\\
    & = \arg\min_{\theta\in\Theta}\left[\left(3.190\theta^{2}+1.657\theta+1\right)\right.\\
    &\quad\left.+E\left((\theta(1-u)(e^{2W_{u}+0.2}-e^{W_{u}+0.1})+W_{u}+0.1)^{2}\Bigg|\text{ the jump happens at time }u\right)\right]\\
    & = \arg\min_{\theta\in\Theta}\bigg[\left(3.190\theta^{2}+1.657\theta+1\right)\\
    &\quad+\int_{0}^{1}\theta^{2}(1-u)^{2}(e^{8u+0.4}-2e^{\frac{9}{2}u+0.3}+e^{2u+0.2})du\\
    &\quad\left.+\int_{0}^{1}\left(2\theta(1-u)((2u+0.1)e^{2u+0.2}-(u+0.1)e^{\frac{1}{2}u+0.1})+(u+0.01)\right)du\right]\\
    & = \arg\min_{\theta\in\Theta}\left[7.607\theta^2 + 2.965\theta + 1.505\right]\approx -0.195.
\end{align*}
For the latent continuous process, $dX_{t}^{C}=dW_{t}$, therefore, the oracle estimator is 
\begin{align*}
    &\theta_{\text{Oracle}}^{*}  \in\arg\min_{\theta\in\Theta}E\left[\int_{0}^{1}\left|\left(\frac{\partial J^{\theta}(t,X_{t}^{C})}{\partial x}\right)^{\dagger}\sigma(t,X_{t-})\right|^{2}dt\right]\\
    & = \arg\min_{\theta\in\Theta}E\left[\int_{0}^{1}\left[\theta(1-t)e^{W_{t}+0.1}+1\right]^{2}dt\right]\\
    & = \arg\min_{\theta\in\Theta}\int_{0}^{1}\left(\theta^{2}(1-t)^{2}e^{2t+0.2}+2\theta(1-t)e^{\frac{1}{2}t+0.1}+1\right)dt \\
    & = -\frac{\int_{0}^{1}(1-t)e^{\frac{1}{2}t+0.1}dt}{\int_{0}^{1}(1-t)^{2}e^{2t+0.2}dt}\\
    & \approx-0.901.
\end{align*}

As in the quadratic case, the value function is nonlinear in $x$, so the bias introduced by jump contamination remains. Consequently, the MSBVE estimator does not recover the oracle solution: $\theta_{\text{Oracle}}^{*} \neq \theta_{\text{MSBVE}}^{*}$.

\subsection{Results}
In our simulation study, we set the time grid size as $\Delta t=0.001$, learning rate $\alpha=0.0005$, and initialize $\theta_{0}=0.5$. We train the model for 100000 episodes with 32 sample paths generated for each episode.

\begin{figure}[htbp]
    \centering
    \begin{subfigure}[b]{0.32\textwidth}
        \centering
        \includegraphics[width=\textwidth]{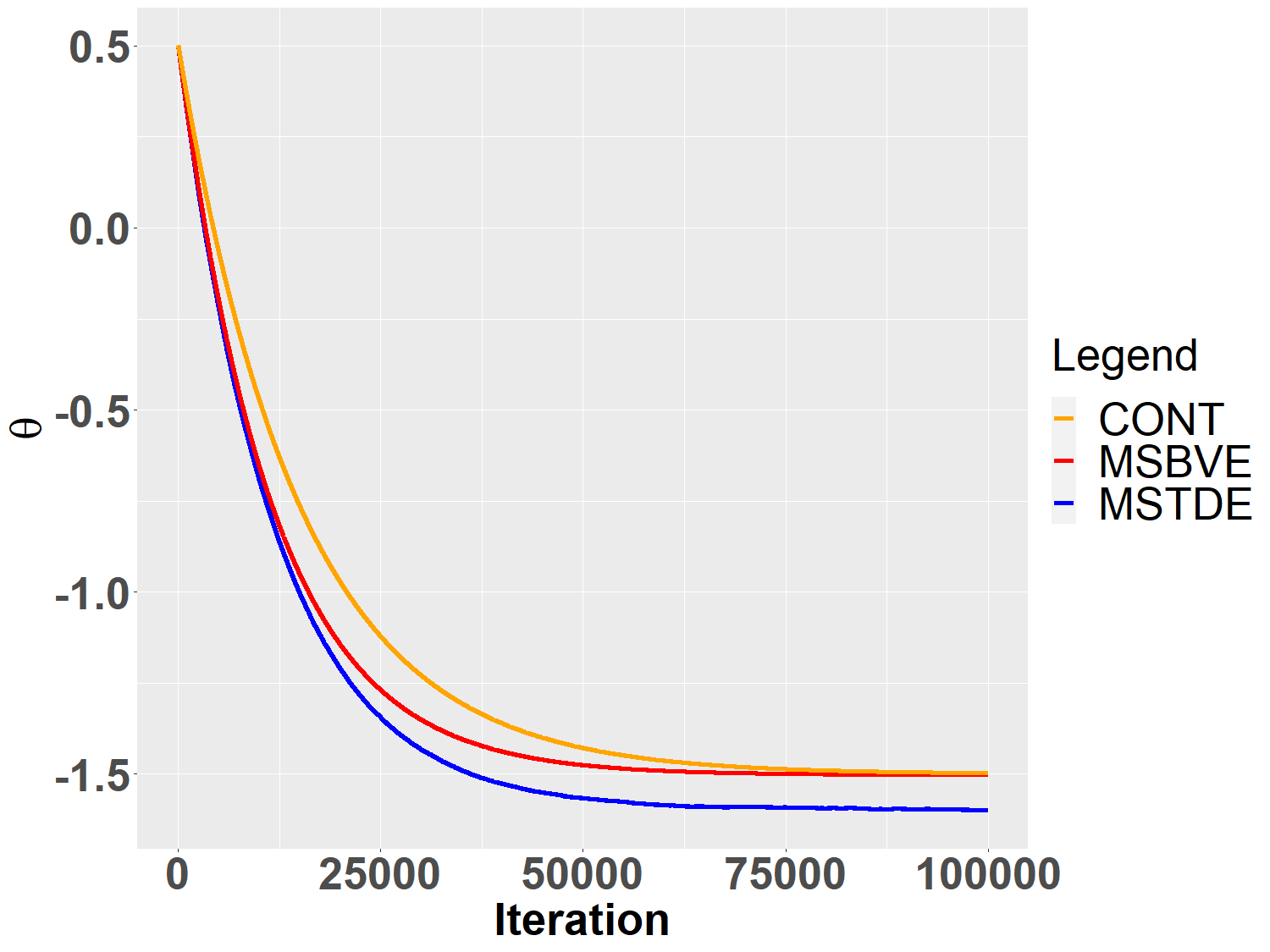}
        \caption{Linear example}
        \label{fig:1a}
    \end{subfigure}
    \hfill
    \begin{subfigure}[b]{0.32\textwidth}
        \centering
        \includegraphics[width=\textwidth]{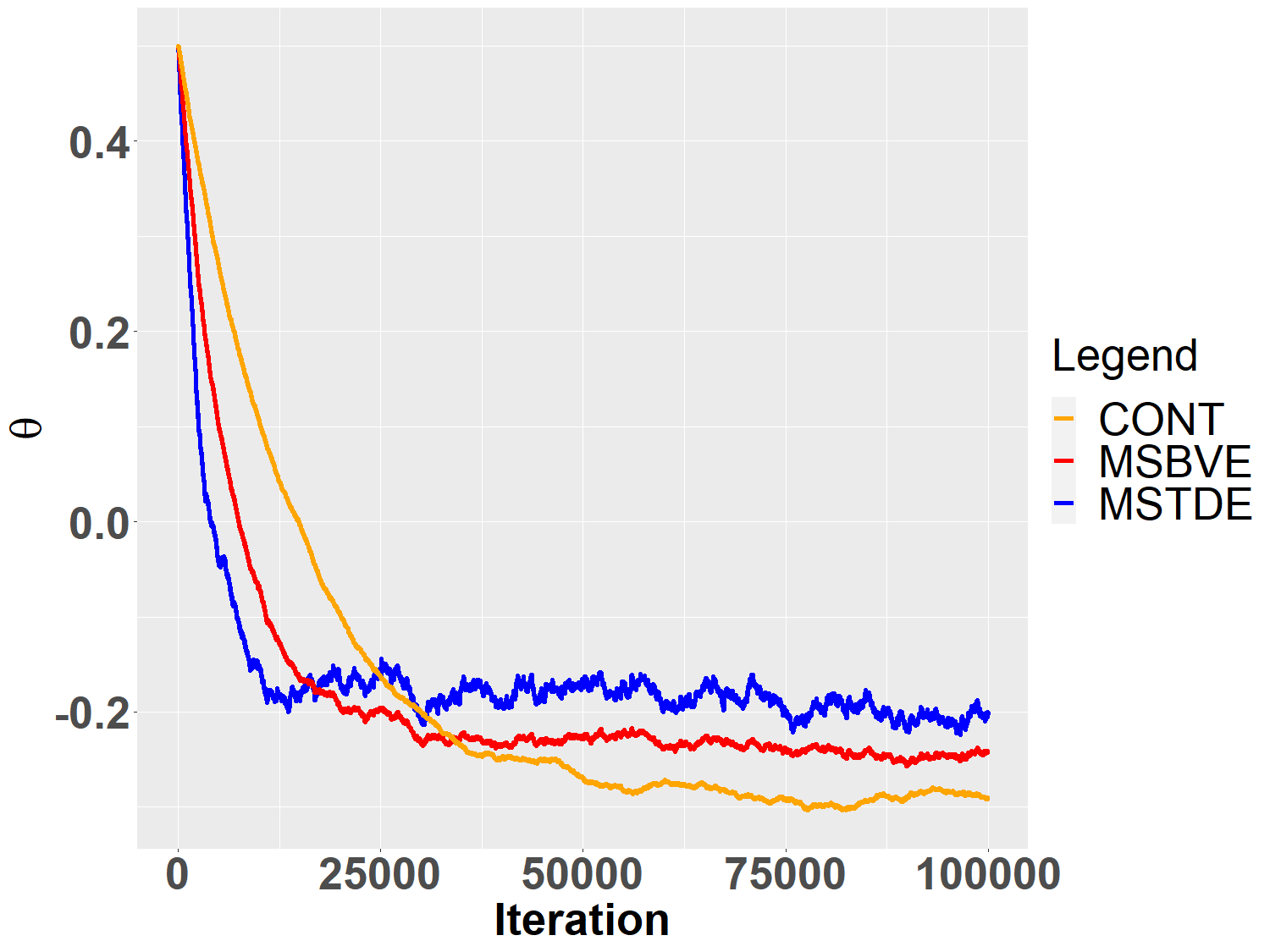}
        \caption{Quadratic example}
        \label{fig:1b}
    \end{subfigure}
        \begin{subfigure}[b]{0.32\textwidth}
        \centering
        \includegraphics[width=\textwidth]{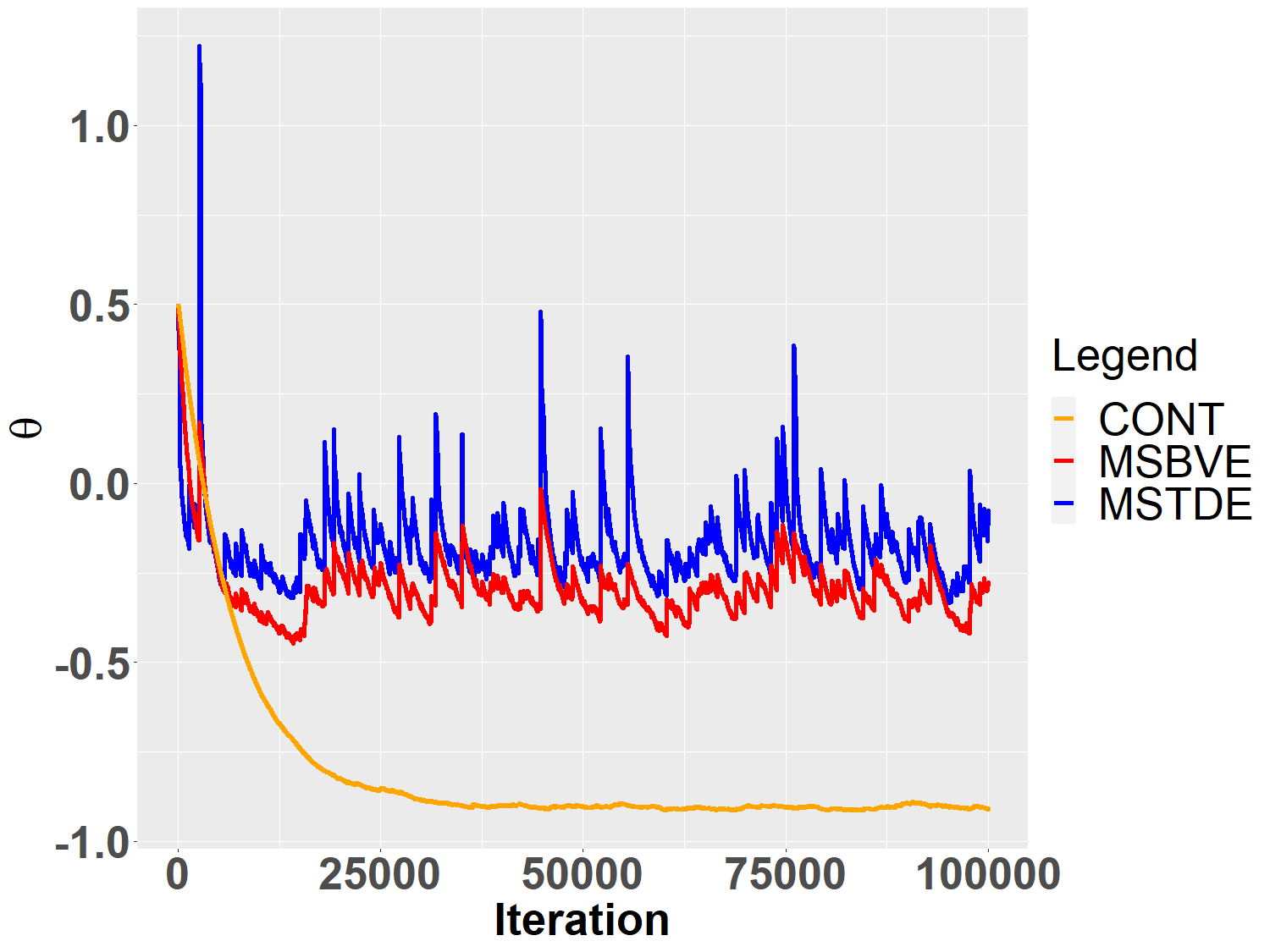}
        \caption{Exponential example}
        \label{fig:1c}
    \end{subfigure}
    \caption{ The convergence of $\theta$ of three different forms of value functions (linear, quadratic, and exponential)  }
    \label{fig:1}
\end{figure}

The results shown in Figure \ref{fig:1} illustrate the convergence behavior of $\theta$ under three different forms of the value function. In Figure \ref{fig:1}(a), where the value function is linear in $x$, the bias term $\text{bias}(\theta)$ vanishes. Consequently, the red line (MSBVE) and the orange line (Oracle estimator) converge to the same value, $-\frac{3}{2}$. However, as the value function has a high-degree polynomial, the bias increases. For instance, in Figure \ref{fig:1}(b), where the value function is quadratic in $x$, the red line (MSBVE) converges to $-\frac{40}{167}$, which is close to the blue line (MSTDE) that converges to $-\frac{8545}{45059}$. In contrast, the oracle estimator (orange line) converges to $-\frac{15}{52}$, showing a larger deviation. Similarly, in Figure \ref{fig:1}(c), where the value function is exponential in $x$, the red line (MSBVE) still converges closer to the oracle estimator than the blue line (MSTDE), although the gap between MSBVE and the oracle increases compared to the linear and quadratic cases.
 
These findings indicate that when the value function is a low-degree polynomial in $x$, the MSBVE algorithm demonstrates greater robustness to stochastic processes with jumps compared to MSTDE. Specifically, MSBVE converges to a quantity associated primarily with the continuous component of the process, even in the presence of jumps. When the value function is a high-degree polynomial in $x$, MSBVE still outperforms MSTDE; however, the performance gap narrows, and both estimators exhibit increased deviation from the oracle estimator.
Thus, it is important and interesting to develop a robust estimator that can handle the remaining bias from the gradient term of the value function, $\frac{\partial J^{\theta}(t,X_{t-})}{\partial x}$. 
We leave this for a future study.

\section{Data Application} \label{application}
In this section, we apply the proposed MSBVE algorithm and compare it with the existing MSTDE algorithm in the context of a mean-variance portfolio selection problem. We consider a financial market with two assets over the time interval $[0,T]$: the risk-free asset follows
\[dB_{t}=r_{f}B_{t}dt,\]
where $r_{f}$ is the risk-free rate,
and the risky asset follows
\[dS_{t}=S_{t}(\mu dt+\sigma dW_{t}),\]
where $\mu$ is the drift, $\sigma$ is the volatility, and $W_{t}$ is a standard Brownian motion. 
Let $X_{t}^{a}$ denote the wealth of an investor who allocates weight $a_{t}$ to the risky asset at time $t$. Following a simplified form of the classical Merton model for continuous-time portfolio selection, the wealth process then follows an SDE:
\[dX_{t}^{a}= a_{t}\left(\sigma\rho dt+\sigma dW_{t}\right),\]
where $\rho=\frac{\mu-r_{f}}{\sigma}$ is the Sharpe ratio.  See \citet{merton1971optimum} for the derivation of portfolio dynamics in a diffusion market. 
The investor aims to minimize the variance of the wealth at time $T$ while maintaining the expected value of the wealth at time $T$ at a certain level $z$. Specifically, 
\[\min_{a}\text{Var}(X^{a}_{T})\quad s.t.\  E(X_{T}^{a})=z.\]
The theoretical optimal policy from \cite{jia2022q} is given by 
\[a_{t}(\rho)=-\frac{\rho}{\sigma}(X_{t}-w(\rho)), \]
where $w(\rho)=\frac{ze^{\rho^{2}T}-X_{0}}{e^{\rho^{2}T}-1}$. Under this optimal policy, the wealth process becomes 
\[dX_{t}= -\rho(X_{t}-w)\left(\rho dt+dW_{t}\right),\]
and the corresponding value function is defined as: 
\begin{align*}
    J(t,x)&=E[X_{T}^{2}|X_{t}=x]-z^{2}\\
    &=(x-w)^2 e^{\rho^{2}(t-T)}-(w-z)^{2}.
\end{align*}
We parameterize the value function as
\[J^{\theta}(t,x)=(x-w)^2 e^{\theta^{2}(t-T)}-(w-z)^{2},\]
and define the corresponding parametric policy as follows:
\[a_{t}(\theta)=-\frac{\theta}{\hat{\sigma}}(x-w(\theta)),\]
where the estimated variance $\hat{\sigma}^{2}$ is given by 
\[\hat{\sigma}^{2}=\frac{\pi}{2}\sum_{i=1}^{n-1}|\Delta X_{t_{i}}|\cdot|\Delta X_{t_{i-1}}|.\]

Our theoretical analysis yields two key predictions: (1) in jump-free environments, MSBVE and MSTDE should exhibit comparable performance; (2) in settings with jumps, MSBVE is expected to outperform MSTDE due to its robustness to jumps.  
To empirically test both settings, we perform two types of experiments: one on raw S\&P 500 data (which includes jumps), and another on data where large jumps are removed. 

For jump detection, we apply the threshold $\tau=4\hat{\sigma}(\Delta t)^{0.47}$, and consider observations with $|\Delta X_{t}|>\tau$ to be large jumps  \citep{ait2020high}.
 In practice, however, this threshold is not applied directly to the wealth process $X_{t}$, but rather to the observed S\&P 500 price process $S_{t}$, since in real markets we only observe the stock price $S_{t}$ and the risk free rate $r_{f}$. Using this method, we identify days with large jumps in the dataset of 5-minute S\&P 500 data from 2010 to 2020. Notably, the period from 2012-10-01 to 2014-09-30 exhibits a higher frequency of jumps, with 195 jump days accounting for approximately 40\% of the sample. Thus, to check the effect of jumps, we focus on this two-year dataset for model training and testing. 
 
To simulate the jump-free scenario, we apply the thresholding procedure to remove large jumps. Specifically, we define the thresholded difference process as follows: 
\begin{align*}
    \Delta S_{t_{i}}^{T}=\begin{cases}
    \Delta S_{t_{i}}, &\text{if }|\Delta S_{t_{i}}|<\tau,\\
    0, &\text{if }|\Delta S_{t_{i}}|\geq\tau.
    \end{cases}
\end{align*}
We can obtain the thresholded process by summarizing the thresholded difference process $S_{t_{i}}^{T}=\sum_{j=0}^{i-1}\Delta S_{t_{j}}^{T}$. 
We then apply MSTDE and MSBVE to the thresholded data and compare the results with those obtained from the raw data.

We use 6 months (126 trading days) of 5-minute S\&P 500 data to train the allocation weights. The weights are updated after each trading day, and the algorithms are then tested on the following trading day. We consider a time horizon of $T=1$ day with time steps $\Delta t=1/79$. This process is repeated in a rolling-window fashion, advancing one day at a time. For each test day, we record the corresponding one-day terminal wealth, resulting in a sequence of terminal wealth values. Using this sequence, we compute the Sharpe ratio for both algorithms.
\begin{figure}[htbp]
    \centering
    \begin{subfigure}[b]{0.48\textwidth}
        \centering
        \includegraphics[width=\textwidth]{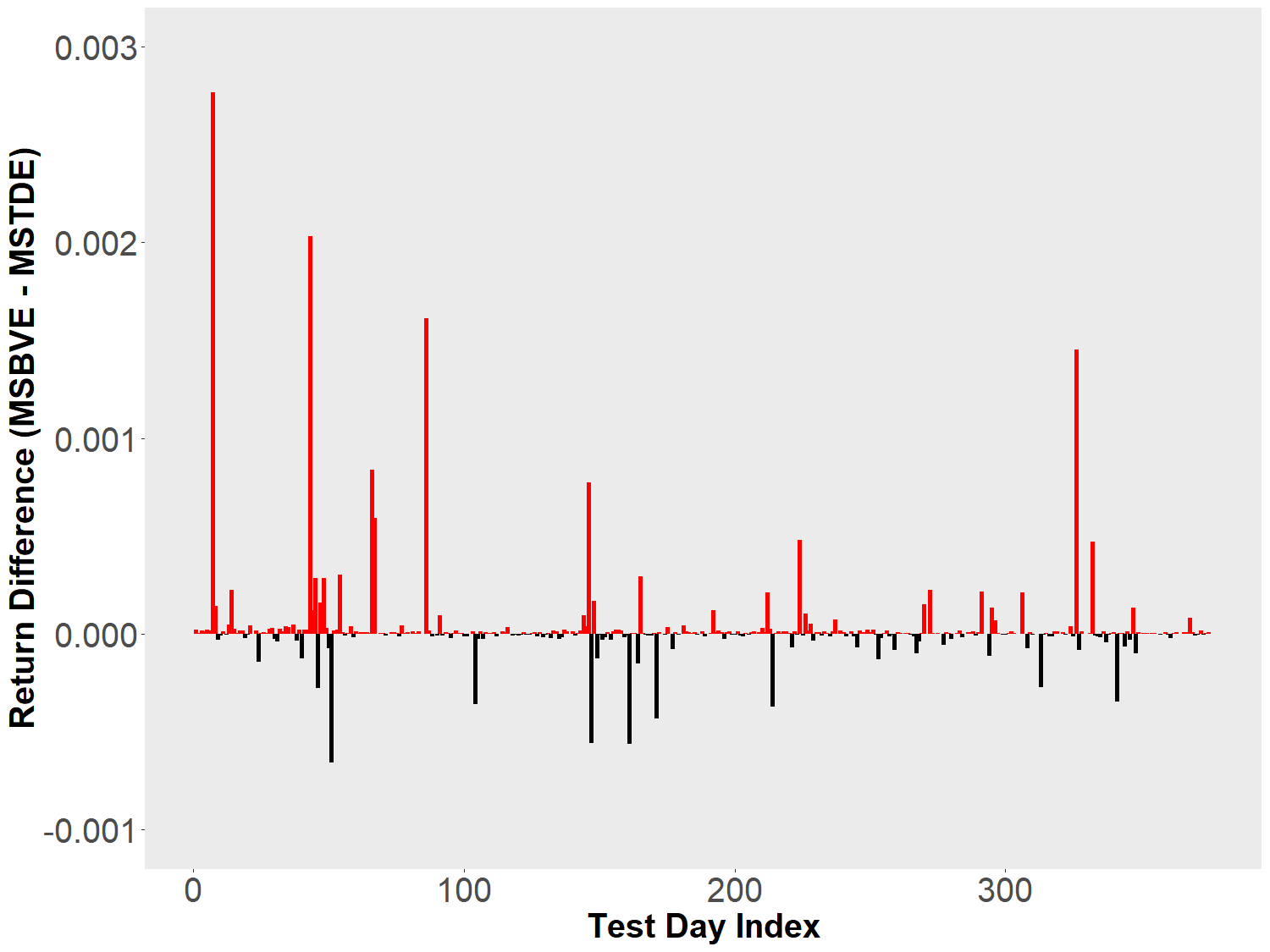}
        \caption{Raw S\&P 500 Data}
        \label{figure:2a}
    \end{subfigure}
    \hfill
    \begin{subfigure}[b]{0.48\textwidth}
        \centering
        \includegraphics[width=\textwidth]{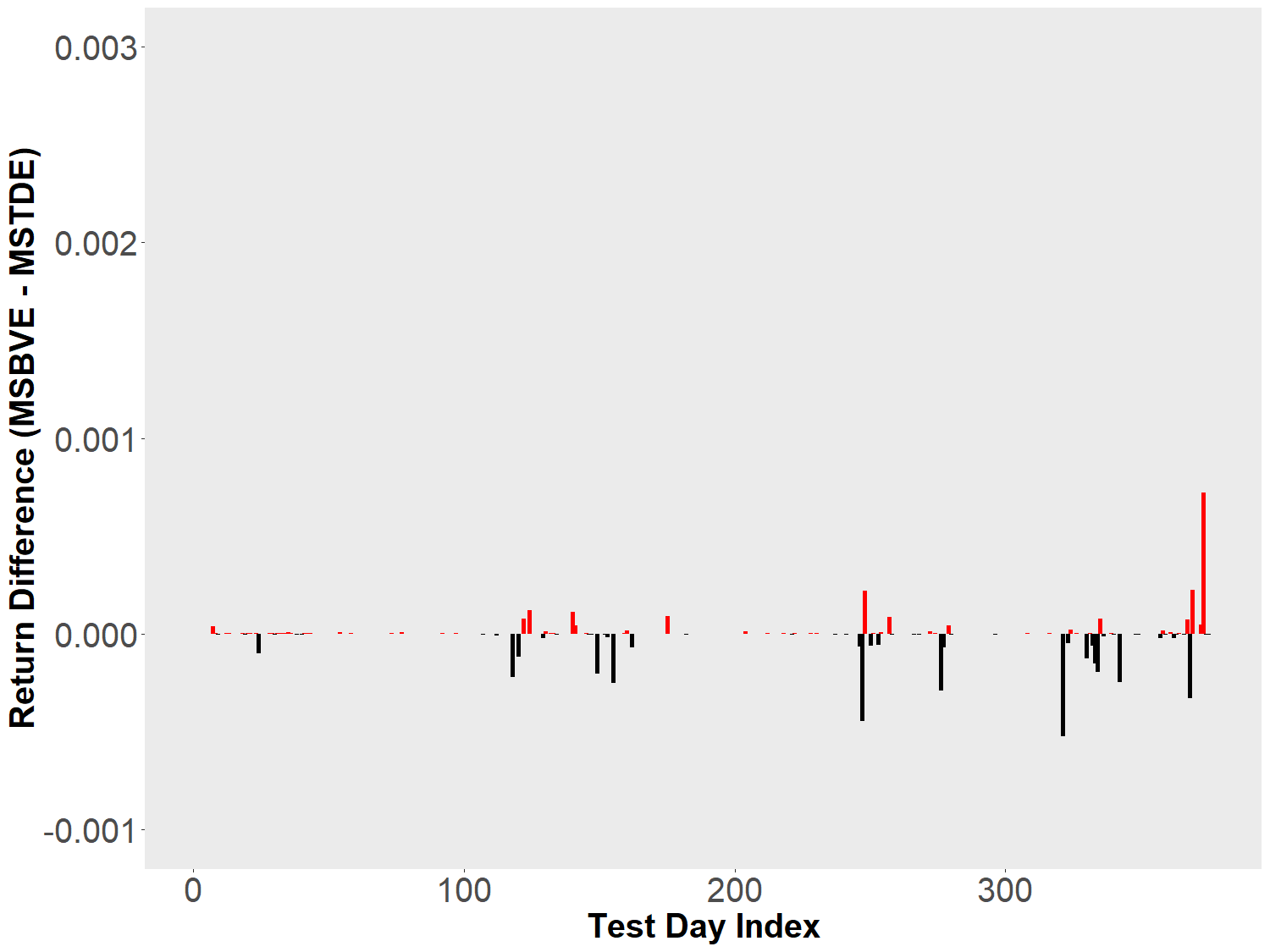}
        \caption{Thresholded Data}
        \label{figure:2b}
    \end{subfigure}
    \caption{Return differences between MSBVE and MSTDE. Note that red indicates instances where MSBVE outperforms MSTDE, while black indicates the opposite.}
    \label{figure:2}
\end{figure}

 Figure~\ref{figure:2} illustrates the difference in daily returns between MSBVE and MSTDE across the testing period. Each vertical line represents the return difference on a single test day, where red indicates instances where MSBVE outperforms MSTDE, while black indicates the opposite. In Figure~\ref{figure:2}(a), which corresponds to the raw data setting, most lines are red with substantial magnitudes of outperformance. This indicates that MSBVE consistently yields higher returns than MSTDE in the presence of jumps. This observation supports our theoretical findings that MSBVE is more robust in environments with jumps.
Figure~\ref{figure:2}(b) presents the performance differences after removing large jumps. In this case, red and black lines appear more balanced, and the return differences are notably smaller, which suggests that the two algorithms perform comparably in the absence of large jumps.

To quantitatively evaluate the performance of the algorithms, we compute the Sharpe ratio under both settings and report the results in Table~\ref{tab:comparison}. We also include q-learning algorithm for comparison, following the implementation in \cite{gao2024reinforcementlearningjumpdiffusionsfinancial}. In the raw data setting, MSBVE achieves a significantly higher Sharpe ratio than both MSTDE and q-learning, reaffirming its superior performance in environments with jumps. In contrast, under the jump-removed setting, the Sharpe ratios of the three algorithms are similar and lower than in the raw data case. These results further confirm that MSBVE performs best when applied directly to real financial data containing jumps, consistent with our theoretical findings.
\begin{table}[h]
\caption{Annualized Sharpe ratio comparison of MSBVE and MSTDE}
\label{tab:comparison}
\centering
\begin{tabular}{ccc}
\toprule
\textbf{Method} & \textbf{Raw S\&P 500 Data} & \textbf{Thresholded Data} \\
\midrule
MSTDE  & 1.041 & 0.587 \\
MSBVE  & 1.253 & 0.571 \\
q-learning & 1.055 & 0.599 \\
\bottomrule
\end{tabular}
\end{table}

\section{Conclusion} \label{sec-6}

In this paper, we explored the problem of estimating the value function in continuous-time reinforcement learning under stochastic dynamics with potential jump components. The continuous and jump parts exhibit fundamentally different characteristics, and in many applications, our primary interest lies in learning the dynamics of the continuous component. However, the latent continuous process is not directly observable due to the presence of jumps.
To address this challenge, we revisited the MSTDE algorithm and proposed a novel MSBVE algorithm, which updates a parameter vector by minimizing an alternative error metric.  
We have demonstrated that the MSBVE algorithm exhibits improved robustness and convergence behavior compared to the MSTDE algorithm. Importantly, when the underlying dynamics follow an SDE  without jumps, the two algorithms yield equivalent results. However, when the dynamics are governed by an SDE with jump components, MSBVE places greater emphasis on the continuous part of the dynamics and is less affected by jumps, whereas MSTDE directly reflects both the continuous and jump components, often resulting in larger bias. These findings highlight that adopting alternative error metrics, such as the mean bipower variation error, can enhance the performance and reliability of reinforcement learning algorithms in continuous-time settings, particularly in the presence of jump-diffusion processes.

 Moving forward, future research may explore extensions and refinements of the MSBVE algorithm, as well as investigate its applicability to real-world reinforcement learning problems characterized by complex dynamics and noisy environments. Additionally, further theoretical analysis and empirical studies could yield deeper insights into the convergence properties and computational efficiency of the proposed method, thereby contributing to the advancement of continuous-time reinforcement learning methodologies.

\addtolength{\textheight}{-.2in}%

\phantomsection\label{supplementary-material}
\bigskip

\begin{center}

{\large\bf SUPPLEMENTARY MATERIAL}

\end{center}

\begin{description}
\item[Appendix A -- Proofs:] 
Detailed proofs of all theoretical results stated in the main article. (PDF file)

\item[Appendix B -- Code and Data:] 
R code and datasets used to reproduce the results presented in the article. (Folder compressed as a ZIP file)

\end{description}

\bibliography{bibliography.bib}
\nocite{Doya:2000}
\nocite{bellman1957dynamic}
\nocite{elkaroui1997}
\nocite{karatzas2014brownian}
\nocite{sutton2018reinforcement}
\nocite{jia2022policy}
\nocite{jia2022q}
\nocite{pham2009continuous}
\nocite{guo2022}
\nocite{Bender2023}
\nocite{Bo2024}
\nocite{Denkert2024}
\nocite{gao2024reinforcementlearningjumpdiffusionsfinancial}
\nocite{ait2017using}
\nocite{ait2020high}

\newpage
\appendix
\section{Proofs}

In this appendix, we prove  Theorem \ref{thm1} and Theorem \ref{thm2}. 
 We first recall a lemma from \cite{jiaandzhou2022}.

\begin{lemma}[Lemma 8 in \cite{jiaandzhou2022}]\label{lemma3}
 Let $f_{h}(x)=f(x)+r_{h}(x)$, where $f$ is a continuous function and $r_{h}$ converges to 0 uniformly on any compact set as $h\rightarrow 0$.
\begin{itemize}
    \item [(a)] Suppose $x_{h}^{*}\in\arg\min_{x}f_{h}(x)\neq\emptyset$ and $\lim_{h\rightarrow 0}x_{h}^{*}=x^{*}$. Then $x^{*}\in\arg\min_{x}f(x)$.
    \item [(b)] Suppose $f_{h}(x_{h}^{*})=0$ and $\lim_{h\rightarrow 0}x_{h}^{*}=x^{*}$. Then $f(x^{*})=0$.
\end{itemize}
\end{lemma}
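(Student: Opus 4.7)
The plan is to prove both parts of Lemma \ref{lemma3} by a direct limit argument, leveraging the continuity of $f$, the uniform convergence $r_h \to 0$ on compact sets, and the fact that a convergent sequence together with its limit forms a compact set. The unifying observation I would exploit is that since $x_h^{*} \to x^{*}$, the set $K_0 := \{x^{*}\}\cup\{x_h^{*} : h \text{ sufficiently small}\}$ is compact, so $\sup_{x\in K_0}|r_h(x)| \to 0$ as $h\to 0$, and in particular $r_h(x_h^{*})\to 0$. Combined with $f(x_h^{*}) \to f(x^{*})$ by continuity of $f$, this gives the central convergence $f_h(x_h^{*}) = f(x_h^{*}) + r_h(x_h^{*}) \to f(x^{*})$.

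Part (b) follows immediately from this observation: since $f_h(x_h^{*}) = 0$ for every $h$, passing to the limit yields $f(x^{*}) = 0$.

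For part (a), I would additionally fix an arbitrary test point $y$ in the domain and exploit the minimizing property $f_h(x_h^{*}) \leq f_h(y)$. Enlarging $K_0$ to the still-compact set $K_0 \cup \{y\}$ and invoking the uniform convergence once more gives $r_h(y) \to 0$, hence $f_h(y) = f(y) + r_h(y) \to f(y)$. Passing to the limit in the inequality $f_h(x_h^{*}) \leq f_h(y)$ then produces $f(x^{*}) \leq f(y)$, and since $y$ was arbitrary, $x^{*} \in \arg\min_x f(x)$.

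The main (mild) obstacle is that I need to apply the uniform convergence hypothesis at the \emph{moving} point $x_h^{*}$ rather than at a fixed argument; this is what forces me to assemble the compact set out of the whole sequence together with its limit, instead of relying on pointwise convergence of $r_h$. Once this compactness device is in place, the rest is a routine two-line limit passage and no further technicality arises.
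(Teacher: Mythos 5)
Your proof is correct and complete; the paper itself does not reproduce an argument for this lemma (it simply defers to Appendix D of Jia and Zhou, 2022), and your limit-passage argument is the standard one that such a reference would contain. One cosmetic remark: for a continuum-indexed family $\{x_h^*\}_{0<h\le h_0}$ the set $\{x^*\}\cup\{x_h^*: h \text{ small}\}$ need not itself be closed, so it is cleaner to invoke uniform convergence on a closed ball $\bar B(x^*,\varepsilon)$ that contains $x_h^*$ for all sufficiently small $h$ (which exists precisely because $x_h^*\to x^*$); this changes nothing of substance in your argument.
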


\begin{proof}
    Refer to Appendix D in \cite{jiaandzhou2022}.
\end{proof}

 We present the following inequalities that will be useful for the proof of Theorem \ref{thm1} and Theorem \ref{thm2}.

\begin{lemma}[Burkholder-Davis-Gundy inequality]\label{lemma4}
For any $1\leq p<\infty$, there exists positive constants $c_{p},C_{p}$ such that, for all local martingales $M$ with $M_{0}=0$ and stopping time $\tau$, the following inequality holds:
\begin{align*}
    c_{p}E[\langle M\rangle_{\tau}^{p/2}]\leq E[|M_{\tau}^{\star}|^{p}]\leq C_{p}E[\langle M\rangle_{\tau}^{p/2}],
\end{align*}
where $\langle M\rangle_{t}$ denotes the quadratic variation of $M$, $M_{t}^{\star}=\sup\limits_{0\leq s\leq t}|M_{s}|$ denotes the maximum process. In particular, if we consider $\tau=T$, then
\begin{align}\label{bdg}
    E[|M_{T}|^{p}]\leq E[|M_{T}^{\star}|^{p}]\leq C_{p}E[\langle M\rangle_{T}^{p/2}].
\end{align}
\end{lemma}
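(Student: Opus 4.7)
The plan is to establish both directions of the Burkholder-Davis-Gundy inequality by combining It\^o's formula applied to $\phi(x)=|x|^p$ with Doob's $L^p$ maximal inequality, after first reducing to a bounded setting by localization. Specifically, I would introduce the stopping times $\tau_n = \inf\{t \geq 0 : |M_t| \vee \langle M\rangle_t \geq n\} \wedge \tau$, prove the estimates for each stopped martingale $M^{\tau_n}$, and then pass to the limit $n \to \infty$ by monotone convergence together with Fatou's lemma. Localization also legitimizes the divisions by $L^p$-norms that appear in the argument, since all relevant quantities are bounded on $\{t \leq \tau_n\}$.

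For the upper bound when $p \geq 2$, the map $\phi(x)=|x|^p$ is $C^2$ with $\phi''(x)=p(p-1)|x|^{p-2}$, so It\^o's formula applied to the continuous local martingale $M^{\tau_n}$ yields
\[
|M_{\tau_n}|^p = p\int_0^{\tau_n} |M_s|^{p-1}\,\text{sgn}(M_s)\,dM_s + \tfrac{p(p-1)}{2}\int_0^{\tau_n} |M_s|^{p-2}\,d\langle M\rangle_s.
\]
Taking expectations annihilates the stochastic integral, and bounding $|M_s|^{p-2} \leq (M^\star_{\tau_n})^{p-2}$ pointwise inside the Lebesgue--Stieltjes integral gives $E[|M_{\tau_n}|^p] \leq \tfrac{p(p-1)}{2} E[(M^\star_{\tau_n})^{p-2}\langle M\rangle_{\tau_n}]$. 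Applying H\"older with conjugate exponents $p/(p-2)$ and $p/2$, and then invoking Doob's $L^p$-inequality $E[(M^\star_{\tau_n})^p] \leq (p/(p-1))^p E[|M_{\tau_n}|^p]$, allows one to cancel the common factor $E[|M_{\tau_n}|^p]^{(p-2)/p}$ from both sides, producing $E[|M_{\tau_n}|^p] \leq C_p E[\langle M\rangle_{\tau_n}^{p/2}]$. One more pass through Doob's inequality promotes this to the maximal version $E[(M^\star_{\tau_n})^p] \leq C'_p E[\langle M\rangle_{\tau_n}^{p/2}]$.

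The case $1 \leq p < 2$ cannot be attacked by It\^o directly since $|x|^p$ fails to be $C^2$ at the origin. Here I would invoke Burkholder's good-$\lambda$ inequality: there exist $\beta > 1$ and a function $\psi(\delta) \downarrow 0$ as $\delta \downarrow 0$ such that
\[
P\bigl(M^\star_{\tau_n} > \beta\lambda,\; \langle M\rangle^{1/2}_{\tau_n} \leq \delta\lambda\bigr) \leq \psi(\delta)\, P(M^\star_{\tau_n} > \lambda),
\]
valid for all $\lambda > 0$. Multiplying by $p\lambda^{p-1}$ and integrating over $\lambda \in (0,\infty)$ yields $E[(M^\star_{\tau_n})^p] \leq C_p E[\langle M\rangle_{\tau_n}^{p/2}]$ for every $p \geq 1$, unifying the upper bound across all exponents. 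Interchanging the roles of $M^\star$ and $\langle M\rangle^{1/2}$ in a parallel good-$\lambda$ estimate produces the matching lower bound $c_p E[\langle M\rangle_{\tau_n}^{p/2}] \leq E[(M^\star_{\tau_n})^p]$.

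The principal obstacle is that Lemma \ref{lemma4} is stated for \emph{arbitrary} local martingales, so the above continuous-path argument must be extended to accommodate jumps. When $M$ has a purely discontinuous part, It\^o's formula acquires additional jump-correction terms and one must distinguish the predictable quadratic variation $\langle M \rangle$ from the optional one $[M,M]$; moreover, at a jump $|x|^{p-2}$ may blow up. The standard remedy is the Davis decomposition $M = M^c + M^d$ into continuous and purely discontinuous components, together with separate estimates on each piece and a triangle inequality in $L^p$. Fortunately, the only martingales to which Lemma \ref{lemma4} is applied in the proofs of Theorems \ref{thm1} and \ref{thm2} are continuous It\^o integrals against $W_t$, so the continuous-path version established in the previous paragraphs already suffices for the paper's purposes.
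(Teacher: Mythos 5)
The paper does not prove Lemma \ref{lemma4} at all: it is quoted as the classical Burkholder--Davis--Gundy inequality, a textbook result, so there is no in-paper argument to compare yours against. Your sketch follows the standard route and is essentially sound for continuous local martingales: localization via $\tau_n$, It\^o's formula on $|x|^p$ combined with H\"older and Doob's $L^p$ maximal inequality for $p\geq 2$ (the cancellation of $E[|M_{\tau_n}|^p]^{(p-2)/p}$ is legitimate precisely because localization makes that quantity finite), and a good-$\lambda$ argument for $1\leq p<2$, where Doob is unavailable at $p=1$. Two remarks. First, for $1\leq p<2$ you invoke Burkholder's good-$\lambda$ inequality as a black box; that inequality is itself the substantive content of BDG in this range, so as written your argument defers rather than supplies the hard step --- acceptable for a result of this standing, but worth flagging. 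Second, your observation about general (discontinuous) local martingales is the most valuable part of the proposal: the lemma as stated in the paper claims the inequality for \emph{all} local martingales with $\langle M\rangle$ described loosely as ``the quadratic variation,'' whereas the general BDG inequality holds with the optional bracket $[M,M]$ and can fail with the predictable bracket when jumps are present. You correctly note that every martingale to which \eqref{bdg} is actually applied in the proofs of Theorems \ref{thm1} and \ref{thm2} is a continuous It\^o integral against $W$, for which the two brackets coincide, so the continuous-path version you establish is all the paper needs.
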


\begin{lemma}[H\"{o}lder's inequality]\label{lemma5}
Let $p,q\in[1,+\infty]$ with $1/p+1/q=1$.
\begin{itemize}
    \item [(a)] Let $u,v\in\mathbb{R}^{n}$, then we have:
\begin{align}\label{holder1}
    \sum_{i=1}^{n}|u_{i}v_{i}|\le\left(\sum_{i=1}^{n}|u_{i}|^{p}\right)^{1/p}\left(\sum_{i=1}^{n}|v_{i}|^{q}\right)^{1/q}.
\end{align}
    \item [(b)] Let $Y,Z$ be random variables, then we have:
\begin{align}\label{holder2}
    E|YZ|\leq \left(E|Y|^{p}\right)^{1/p}\left(E|Z|^{q}\right)^{1/q}.
\end{align}
    \item [(c)] Let $f,g$ be real-valued integrable functions, then we have:
\begin{align}\label{holder3}
    \int |f(x)g(x)|dx\leq\left(\int |f(x)|^{p}dx\right)^{1/p}\left(\int |g(x)|^{q}dx\right)^{1/q}.
\end{align}
\end{itemize}
\end{lemma}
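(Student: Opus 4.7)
The plan is to derive all three parts of Lemma \ref{lemma5} from a common source, namely Young's inequality: for any $a,b\geq 0$ and conjugate exponents $p,q\in(1,\infty)$ with $1/p+1/q=1$,
\[
ab \leq \frac{a^p}{p} + \frac{b^q}{q}.
\]
This auxiliary inequality follows at once from the concavity of $\log$: when $a,b>0$,
\[
\log\!\left(\frac{a^p}{p}+\frac{b^q}{q}\right) \;\geq\; \frac{1}{p}\log(a^p)+\frac{1}{q}\log(b^q)=\log(ab),
\]
and the case when one of $a,b$ vanishes is trivial. The boundary exponents $(p,q)=(1,\infty)$ or $(\infty,1)$ are handled separately and reduce to the essential-supremum bound (e.g., $\sum_i|u_iv_i|\leq(\sup_j|v_j|)\sum_i|u_i|$ for (a), and the analogous statements for (b), (c)), which is immediate.

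For part (a), I would assume without loss of generality that $U:=(\sum_i|u_i|^p)^{1/p}>0$ and $V:=(\sum_i|v_i|^q)^{1/q}>0$, since otherwise the right-hand side vanishes and forces every $u_i v_i=0$. Normalize by setting $\tilde u_i=|u_i|/U$ and $\tilde v_i=|v_i|/V$, so that $\sum_i \tilde u_i^{p}=\sum_i \tilde v_i^{q}=1$. Applying Young's inequality pointwise,
\[
\tilde u_i\,\tilde v_i \leq \frac{\tilde u_i^{p}}{p}+\frac{\tilde v_i^{q}}{q},
\]
and summing over $i$ gives $\sum_i\tilde u_i\tilde v_i\leq \frac{1}{p}+\frac{1}{q}=1$. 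Multiplying through by $UV$ yields (\ref{holder1}).

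For parts (b) and (c), the exact same normalize-and-aggregate scheme applies with the finite sum replaced by an expectation or Lebesgue integral, respectively. In (b), normalize $Y,Z$ by $U=(E|Y|^p)^{1/p}$ and $V=(E|Z|^q)^{1/q}$, apply Young's inequality to $|Y|/U$ and $|Z|/V$ pointwise, and take expectations to conclude $E[|YZ|]/(UV)\leq 1$. In (c), the same argument works verbatim with $dx$-integration in place of $E$.

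The main obstacle here is merely administrative rather than mathematical: the three statements are a single inequality in $L^p$-spaces over counting measure, a probability measure, and Lebesgue measure, so I would factor the proof by establishing Young's inequality once as a pointwise bound and then deploying the normalize-and-integrate pattern three times. The only genuine care required is handling the boundary exponents $\{1,\infty\}$ and the degenerate situations in which one of the $L^p$-norms is zero or infinite, both of which trivialize the inequality.
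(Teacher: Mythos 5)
Your proof is correct: it is the standard derivation of H\"older's inequality from Young's inequality via normalization, applied uniformly over counting measure, a probability measure, and Lebesgue measure, with the boundary exponents and degenerate cases handled appropriately. The paper itself offers no proof of Lemma \ref{lemma5} --- it is stated as a classical result to be invoked in the proofs of Theorems \ref{thm1} and \ref{thm2} --- so there is nothing to compare against; your argument fills that (intentional) gap in the standard way.
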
 

\begin{lemma}\label{lemma6}
    Let $a,b,c,d\in\mathbb{R}$, if $bd=0$, then we have:
\begin{align}\label{abs}
    \Big||a+b||c+d|-|a||c|\Big|\leq|a||d|+|b||c|.
\end{align}
\end{lemma}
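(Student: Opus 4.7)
The plan is to exploit the hypothesis $bd=0$ by splitting into the two (non-exclusive) cases $b=0$ and $d=0$, and in each case reduce the left-hand side to a single factor so that the reverse triangle inequality $\bigl||x+y|-|x|\bigr|\leq|y|$ can finish the job.

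First I would handle the case $b=0$. The left-hand side becomes
\[
\bigl||a||c+d|-|a||c|\bigr|=|a|\cdot\bigl||c+d|-|c|\bigr|\leq |a|\,|d|,
\]
where the inequality is the reverse triangle inequality applied to $c$ and $d$. Since $b=0$ implies $|b||c|=0$, this is exactly the bound $|a||d|+|b||c|$.

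Next I would handle the case $d=0$ symmetrically. The left-hand side is
\[
\bigl||a+b||c|-|a||c|\bigr|=|c|\cdot\bigl||a+b|-|a|\bigr|\leq |c|\,|b|=|b||c|,
\]
again by the reverse triangle inequality, and now $d=0$ forces $|a||d|=0$, so the bound $|a||d|+|b||c|$ holds. Since $bd=0$ means at least one of $b$ or $d$ vanishes, one of the two cases always applies, completing the argument.

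There is really no main obstacle here — the lemma is a one-line consequence of the reverse triangle inequality once the case split on $bd=0$ collapses the product structure into a single variable factor. The only thing to be careful about is that both cases $b=0$ and $d=0$ can hold simultaneously (when both vanish), in which case the left-hand side is zero and the inequality is trivial; the argument above still works as written.
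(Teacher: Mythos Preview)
Your proof is correct and takes essentially the same approach as the paper: a case split on which of $b$ or $d$ vanishes, followed by the (reverse) triangle inequality applied to the remaining variable sum. The only cosmetic difference is that the paper writes out the triangle-inequality chain $|c|-|d|\le|c+d|\le|c|+|d|$ explicitly before multiplying through by $|a|$, whereas you invoke the reverse triangle inequality directly.
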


\begin{proof}
    If $b=0$, 
\begin{align*}
    & |c|-|d|\leq |c+d|\leq |c|+|d|\\
    \Longrightarrow\quad & |a||c|-|a||d|\leq |a||c+d|\leq |a||c|+|a||d|\\
    \Longrightarrow\quad & -|a||d|\leq |a||c+d|-|a||c|\leq |a||d|\\
    \Longrightarrow\quad & \Big||a||c+d|-|a||c|\Big|\leq |a||d|\\
    \Longrightarrow\quad & \Big||a+b||c+d|-|a||c|\Big|\leq |a||d|+|b||c|.
\end{align*}
If $d=0$,
\begin{align*}
    & |a|-|b|\leq |a+b|\leq |a|+|b|\\
    \Longrightarrow\quad & |a||c|-|b||c|\leq |c||a+b|\leq |a||c|+|b||c|\\
    \Longrightarrow\quad & -|b||c|\leq |c||a+b|-|a||c|\leq |b||c|\\
    \Longrightarrow\quad & \Big||c||a+b|-|a||c|\Big|\leq |b||c|\\
    \Longrightarrow\quad & \Big||a+b||c+d|-|a||c|\Big|\leq |a||d|+|b||c|.
\end{align*}
\end{proof}

We present the following standard results for asymptotic convergence and It\^o process.

\begin{lemma}[Theorem 1.8 in \cite{shao2003mathematical}]\label{lemma7} Let $X_{n}$ be a sequence of random variables, suppose that $X_{n}\stackrel{d}\rightarrow X$. Then for any $r>0$, 
\[\lim_{n\rightarrow\infty}E\Vert X_{n}\Vert_{r}^{r}=E\Vert X\Vert_{r}^{r}<\infty\]
if and only if $\{\Vert X_{n}\Vert_{r}^{r}\}$ is uniformly integrable in the sense that
\[\lim_{t\rightarrow\infty}\sup_{n}E(\Vert X_{n}\Vert_{r}^{r}I_{\{\Vert X_{n}\Vert_{r}>t\}})=0.\]
A sufficient condition for uniform integrability of $\{\Vert X_{n}\Vert_{r}^{r}\}$ is that
\[\sup_{n}E\Vert X_{n}\Vert_{r}^{r+\delta}<\infty\]
for a $\delta>0$.
\end{lemma}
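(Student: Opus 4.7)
The plan is to reduce to the real-valued nonnegative case by setting $Y_{n}:=\Vert X_{n}\Vert_{r}^{r}$ and $Y:=\Vert X\Vert_{r}^{r}$. Because $x\mapsto \Vert x\Vert_{r}^{r}$ is continuous and $X_{n}\stackrel{d}\rightarrow X$, the continuous mapping theorem yields $Y_{n}\stackrel{d}\rightarrow Y$ with all variables nonnegative. The lemma then reduces to the classical Vitali-type equivalence: for nonnegative $Y_{n}\stackrel{d}\rightarrow Y$, one has $EY_{n}\rightarrow EY<\infty$ if and only if $\{Y_{n}\}$ is uniformly integrable. I would handle the two implications separately and finish with a Markov-type bound for the sufficient condition.

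For the ($\Leftarrow$) direction, I would introduce the truncation $Y_{n}\wedge t$. Since $y\mapsto y\wedge t$ is bounded and continuous, $E[Y_{n}\wedge t]\rightarrow E[Y\wedge t]$ for each fixed $t$. The gap $EY_{n}-E[Y_{n}\wedge t]=E[(Y_{n}-t)^{+}]\le E[Y_{n}\mathbf{1}_{\{Y_{n}>t\}}]$ is made uniformly small in $n$ by the UI hypothesis, and Fatou's lemma combined with the bound $\sup_{n}EY_{n}\le t_{0}+\sup_{n}E[Y_{n}\mathbf{1}_{\{Y_{n}>t_{0}\}}]<\infty$ (for some fixed $t_{0}$) yields both $EY<\infty$ and $EY_{n}\rightarrow EY$ by an $\varepsilon/3$ split. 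For the ($\Rightarrow$) direction I would pick a continuous cutoff $\psi_{t}:[0,\infty)\rightarrow[0,\infty)$ equal to $y$ on $[0,t]$, to $0$ on $[t+1,\infty)$, and linearly interpolated in between. Since $\psi_{t}$ is bounded and continuous, $E\psi_{t}(Y_{n})\rightarrow E\psi_{t}(Y)$. Combined with $EY_{n}\rightarrow EY$ and the inequality $E[Y_{n}\mathbf{1}_{\{Y_{n}>t+1\}}]\le EY_{n}-E\psi_{t}(Y_{n})$ (using that $\psi_{t}\equiv0$ beyond $t+1$ and $\psi_{t}\le Y_{n}$), passing to the limit in $n$ gives
\[\limsup_{n\rightarrow\infty}E[Y_{n}\mathbf{1}_{\{Y_{n}>t+1\}}]\le EY-E\psi_{t}(Y),\]
and the right side vanishes as $t\rightarrow\infty$ by dominated convergence. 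Finite index $n$ are handled individually since each $Y_{n}$ is integrable, so taking a maximum of truncation levels yields UI over all $n$.

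For the sufficient condition, I would simply note that if $\sup_{n}E\Vert X_{n}\Vert_{r}^{r+\delta}<\infty$, then on $\{Y_{n}>t\}$ one has $1\le (Y_{n}/t)^{\delta/r}$, so
\[E[Y_{n}\mathbf{1}_{\{Y_{n}>t\}}]\le t^{-\delta/r}E\bigl[Y_{n}^{1+\delta/r}\bigr]=t^{-\delta/r}E\Vert X_{n}\Vert_{r}^{r+\delta},\]
which goes to $0$ uniformly in $n$. The main obstacle I anticipate is the ($\Rightarrow$) direction: distributional convergence together with convergence of the scalar means must be leveraged to produce a uniform tail bound, and the subtlety lies in interchanging $n\rightarrow\infty$ and $t\rightarrow\infty$. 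The cutoff $\psi_{t}$ is the trick that resolves this, converting a statement about $L^{1}$-limits of $Y_{n}$ into a statement about bounded continuous functionals, where weak convergence can be used freely.
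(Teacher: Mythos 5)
Your proposal is correct: the truncation $Y_n\wedge t$ for the ``if'' direction, the continuous cutoff $\psi_t$ for the ``only if'' direction, and the Markov-type bound $E[Y_n\mathbf{1}_{\{Y_n>t\}}]\le t^{-\delta/r}E\|X_n\|_r^{r+\delta}$ for the sufficient condition are all sound, and the reduction to the nonnegative scalar case via the continuous mapping theorem is the right first step. The paper itself offers no proof of this lemma --- it is quoted verbatim as Theorem 1.8 of the cited textbook --- so there is nothing to compare against; your argument is the standard one and fills that gap correctly.
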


\begin{lemma}\label{lemma8} Consider a semimartingale of the form $Y_{t}=Y_{0}+\int_{0}^{t}b_{s}ds+\int_{0}^{t}\sigma_{s}dW_{s}$ and the partition $0=t_{0}<t_{1}<\cdots<t_{n}=T$ with $t_{i}=T/n$, we have
\begin{itemize}
    \item [(a)] (Theorem 4.47 in \cite{jacod_shiryaev_2003}) 
\[\sum_{i=1}^{n}(Y_{t_{i}}-Y_{t_{i-1}})^{2}\stackrel{p}\rightarrow\int_{0}^{T}\sigma_{s}^{2}ds.\]
    \item [(b)] (Theorem 2.2 in \cite{Barndorff-Nielsen2006})
\[\sum_{i=1}^{n-1}|Y_{t_{i+1}}-Y_{t_{i}}||Y_{t_{i}}-Y_{t_{i-1}}|\stackrel{p}\rightarrow\frac{2}{\pi}\int_{0}^{T}\sigma_{s}^{2}ds.\]
\end{itemize}
\end{lemma}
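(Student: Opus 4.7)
The plan is to first reduce both (a) and (b) to statements about a pure diffusion by showing that contributions from the drift term $\int_{0}^{t}b_{s}\,ds$ are asymptotically negligible. On any interval $[t_{i-1},t_i]$ of length $\Delta t = T/n$, the drift increment is $O_p(1/n)$ while the diffusion increment is $O_p(n^{-1/2})$, so when we sum squared increments in (a) the pure drift contribution is $O_p(n^{-1})$ and the drift/diffusion cross-term is $O_p(n^{-1/2})$; analogous bounds hold for the products of adjacent absolute increments in (b). After a standard localization (stopping time that makes $\sigma_s$ uniformly bounded), it therefore suffices to analyze $\widetilde Y_t := \int_{0}^{t}\sigma_s\,dW_s$.

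For part (a), I would replace $\sigma_s$ on each $[t_{i-1},t_i]$ by its left endpoint value $\sigma_{t_{i-1}}$, incurring an $L^2$ error that vanishes by right-continuity and boundedness of $\sigma$. The sum then reduces to $\sum_{i}\sigma_{t_{i-1}}^{2}(W_{t_i}-W_{t_{i-1}})^{2}$. Splitting $(W_{t_i}-W_{t_{i-1}})^{2} = \Delta t + \big((W_{t_i}-W_{t_{i-1}})^{2}-\Delta t\big)$, the first piece gives the Riemann sum $\sum_{i}\sigma_{t_{i-1}}^{2}\,\Delta t \to \int_{0}^{T}\sigma_s^{2}\,ds$, while the second piece defines a discrete martingale with predictable quadratic variation of order $\sum_i \sigma_{t_{i-1}}^{4}(\Delta t)^{2} = O(1/n)$, which vanishes in $L^{2}$ and hence in probability.

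For part (b), the heart of the matter is the elementary identity $E[|Z_{1}||Z_{2}|]=(E|Z|)^{2}=2/\pi$ for independent standard normals, since $E|Z|=\sqrt{2/\pi}$. After the same drift-removal and volatility-freezing steps, the $i$-th summand is approximately $|\sigma_{t_{i-1}}||\sigma_{t_i}|\cdot|W_{t_{i+1}}-W_{t_i}|\cdot|W_{t_i}-W_{t_{i-1}}|$. Writing $|W_{t_i}-W_{t_{i-1}}|=\sqrt{\Delta t}\,|Z_i|$ with $\{Z_i\}$ i.i.d.\ standard normal, and noting that $Z_i$ and $Z_{i+1}$ come from non-overlapping increments and are therefore independent, the sum becomes $\Delta t\sum_{i=1}^{n-1}|\sigma_{t_{i-1}}||\sigma_{t_i}|\cdot|Z_i||Z_{i+1}|$. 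Taking conditional expectation with respect to the $\sigma$-field generated by $\{\sigma_s\}$ extracts the factor $E[|Z_i||Z_{i+1}|]=2/\pi$, leaving a Riemann sum of $|\sigma_{t_{i-1}}||\sigma_{t_i}|\approx\sigma_{t_{i-1}}^{2}$ that converges to $\int_{0}^{T}\sigma_s^{2}\,ds$; a Chebyshev bound on the (non-martingale) remainder handles the fluctuations around the conditional mean.

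The main obstacle will be controlling the approximation errors rigorously, in particular the replacement $|\sigma_{t_i}|\approx|\sigma_{t_{i-1}}|$ inside the bipower sum, since $\sigma$ itself is a possibly non-smooth adapted process. This is handled either by exploiting continuity in probability of $\sigma$ together with boundedness after localization, or by a sandwich argument using the monotonicity of $|\cdot|$. A second technical nuisance is the overlapping structure: consecutive summands share the factor $|Z_i|$, so the bipower sum is not a martingale and one cannot directly invoke the Burkholder--Davis--Gundy inequality (Lemma \ref{lemma4}); instead I would compute the conditional mean and variance with respect to the filtration generated by $\sigma$ and apply an $L^{2}$ argument, which essentially reproduces the proof of Barndorff-Nielsen and Shephard that the lemma cites.
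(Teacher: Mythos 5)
The paper does not actually prove Lemma \ref{lemma8}: it is stated as a pair of known results and discharged by citation to Theorem 4.47 of \cite{jacod_shiryaev_2003} and Theorem 2.2 of \cite{Barndorff-Nielsen2006}. Your proposal is therefore doing something the paper does not attempt, namely reconstructing the proofs from scratch, and the reconstruction is essentially sound: drift removal via the $O_p(\Delta t)$ versus $O_p(\sqrt{\Delta t})$ scaling, localization, volatility freezing at left endpoints, the decomposition of $(W_{t_i}-W_{t_{i-1}})^2$ into $\Delta t$ plus a martingale-difference remainder for (a), and the identity $E[|Z_1||Z_2|]=(E|Z|)^2=2/\pi$ together with a $1$-dependent $L^2$/Chebyshev argument for (b) are exactly the ingredients of the standard proofs, and you correctly flag the two real technical points (regularity of $\sigma$ needed for the freezing step, and the non-martingale overlap in the bipower sum). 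Two small caveats: the Jacod--Shiryaev result for (a) is actually proved for general semimartingales without any path regularity of $\sigma$ beyond being an It\^o integrand, so your freezing argument proves a slightly less general statement --- though one sufficient for this paper, where the integrand $\frac{\partial J^{\theta}}{\partial x}(t,X_{t-})^{\dagger}\sigma(t,X_{t-})$ is c\`agl\`ad; and ``right-continuity'' should be continuity in probability (or c\`adl\`ag/c\`agl\`ad paths) uniformly over the partition, which on a compact interval does follow from pointwise continuity in probability plus the localization you invoke. Neither caveat is a gap at the level of a proof sketch.
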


Now, we start to prove Theorem \ref{thm1}.
\begin{proof}
Let us start by highlighting an useful fact. First fix a sufficiently small time grid size $\Delta t$, by Assumption \ref{ass1}(v), when $\Delta t$ is small enough, there is at most one jump within each interval $[t_{i},t_{i+1}]$, hence
\begin{align*}
    &\int_{t_{i}}^{t_{i+1}}\left[J^{\theta}(t,X_{t})-J^{\theta}(t,X_{t-})\right]dN_{t}\\
    &=\left\{\begin{array}{rcl}
    &0, &\text{if there is no jump in }[t_{i},t_{i+1}]\\
    &J^{\theta}\left(s_{i},X_{s_{i}}\right)-J^{\theta}\left(s_{i}-,X_{s_{i}-}\right), &\text{if there is a jump happens at }s_{i}\in[t_{i},t_{i+1}].
    \end{array}
    \right.
\end{align*}
This implies for any $K>0$,
\begin{align}\label{jumpproof}
    \left\{\int_{t_{i}}^{t_{i+1}}[J^{\theta}(t,X_{t})-J^{\theta}(t,X_{t-})]dN_{t}\right\}^{K}=\int_{t_{i}}^{t_{i+1}}\left[J^{\theta}(t,X_{t})-J^{\theta}(t,X_{t-})\right]^{K}dN_{t}.
\end{align}
Now let's prove the theorem. By It\^o's formula on $J^{\theta}(t_{i+1},X_{t_{i+1}})$, we have

\begin{align*}
    & \sum\limits_{i=0}^{n-1}\left[J^{\theta}(t_{i+1},X_{t_{i+1}})-J^{\theta}(t_{i},X_{t_{i}})\right]^{2} \\
    & = \sum\limits_{i=0}^{n-1}\left\{\int_{t_{i}}^{t_{i+1}}\mathcal{L}J^{\theta}(t,X_{t-})dt+\int_{t_{i}}^{t_{i+1}}\frac{\partial J^{\theta}}{\partial x}(t,X_{t-})^{\dagger}\sigma(t,X_{t-})dW_{t}\right.\\
    &\left.+\int_{t_{i}}^{t_{i+1}}\left[J^{\theta}(t,X_{t})-J^{\theta}(t,X_{t-})\right]dN_{t}\right\}^{2}\\
    & =\sum\limits_{i=0}^{n-1}\left\{\left(\int_{t_{i}}^{t_{i+1}}\mathcal{L}J^{\theta}(t,X_{t-})dt\right)^{2}+\left[\int_{t_{i}}^{t_{i+1}}\frac{\partial J^{\theta}}{\partial x}(t,X_{t-})^{\dagger}\sigma(t,X_{t-})dW_{t}\right]^{2}\right.\\
    & +\int_{t_{i}}^{t_{i+1}}\left[J^{\theta}(t,X_{t})-J^{\theta}(t,X_{t-})\right]^{2}dN_{t}\\
    &+2\left(\int_{t_{i}}^{t_{i+1}}\mathcal{L}J^{\theta}(t,X_{t-})dt\right)\left(\int_{t_{i}}^{t_{i+1}}\frac{\partial J^{\theta}}{\partial x}(t,X_{t-})^{\dagger}\sigma(t,X_{t-})dW_{t}\right)\\
    & +2\left\{\int_{t_{i}}^{t_{i+1}}\left[J^{\theta}(t,X_{t})-J^{\theta}(t,X_{t-})\right]dN_{t}\right\}\\
    &\quad\left.\cdot\left[\int_{t_{i}}^{t_{i+1}}\mathcal{L}J^{\theta}(t,X_{t-})dt+\int_{t_{i}}^{t_{i+1}}\frac{\partial J^{\theta}}{\partial x}(t,X_{t-})^{\dagger}\sigma(t,X_{t-})dW_{t})\right]\right\}.
\end{align*}
By It\^o's isometry,
\begin{align*}
    E\left\{\sum\limits_{i=0}^{n-1}\left[\int_{t_{i}}^{t_{i+1}}\frac{\partial J^{\theta}}{\partial x}(t,X_{t-})^{\dagger}\sigma(t,X_{t-})dW_{t}\right]^{2}\right\} & =E\left\{\sum\limits_{i=0}^{n-1}\int_{t_{i}}^{t_{i+1}}\left[\frac{\partial J^{\theta}}{\partial x}(t,X_{t-})^{\dagger}\sigma(t,X_{t-})\right]^{2}dt\right\}\\
    & = E\left\{\int_{0}^{T}\left[\frac{\partial J^{\theta}}{\partial x}(t,X_{t-})^{\dagger}\sigma(t,X_{t-})\right]^{2}dt\right\}.
\end{align*}
Thus,
\begin{align*}
    & \text{MSTDE}_{\Delta t}^{*}(\theta)\\
    & = E\left\{\int_{0}^{T}\left[\frac{\partial J^{\theta}}{\partial x}(t,X_{t-})^{\dagger}\sigma(t,X_{t-})\right]^{2}dt+\int_{0}^{T}\left[J^{\theta}(t,X_{t})-J^{\theta}(t,X_{t-})\right]^{2}dN_{t}\right\}\\
    & + \sum\limits_{i=0}^{n-1}E\left\{\left[\int_{t_{i}}^{t_{i+1}}\mathcal{L}J^{\theta}(t,X_{t-})dt\right]^{2}\right.\\
    &\left.+2\left[\int_{t_{i}}^{t_{i+1}}\mathcal{L}J^{\theta}(t,X_{t-})dt\right]\left[\int_{t_{i}}^{t_{i+1}}\frac{\partial J^{\theta}}{\partial x}(t,X_{t-})^{\dagger}\sigma(t,X_{t-})dW_{t}\right]\right.\\
    & +2\left[\int_{t_{i}}^{t_{i+1}}\left[J^{\theta}(t,X_{t})-J^{\theta}(t,X_{t-})\right]dN_{t}\right]\\
    &\quad\cdot\left.\left[\int_{t_{i}}^{t_{i+1}}\mathcal{L}J^{\theta}(t,X_{t-})dt+\int_{t_{i}}^{t_{i+1}}\frac{\partial J^{\theta}}{\partial x}(t,X_{t-})^{\dagger}\sigma(t,X_{t-})dW_{t}\right]\right\}.
\end{align*}
We write $\text{MSTDE}_{\Delta t}^{*}(\theta)=QV(\theta)+R_{1}(\theta)+R_{2}(\theta)$, where
\[QV(\theta):=E\left\{\int_{0}^{T}\left[\frac{\partial J^{\theta}}{\partial x}(t,X_{t-})^{\dagger}\sigma(t,X_{t-})\right]^{2}dt+\int_{0}^{T}\left[J^{\theta}(t,X_{t})-J^{\theta}(t,X_{t-})\right]^{2}dN_{t}\right\},\]
\begin{align*}
    R_{1}(\theta):&= \sum\limits_{i=0}^{n-1}E\left\{\left[\int_{t_{i}}^{t_{i+1}}\mathcal{L}J^{\theta}(t,X_{t-})dt\right]^{2}\right.\\
&+2\left.\left[\int_{t_{i}}^{t_{i+1}}\mathcal{L}J^{\theta}(t,X_{t-})dt\right]\left[\int_{t_{i}}^{t_{i+1}}\frac{\partial J^{\theta}}{\partial x}(t,X_{t-})^{\dagger}\sigma(t,X_{t-})dW_{t}\right]\right\},
\end{align*}
and
\begin{align*}
R_{2}(\theta):=\sum\limits_{i=0}^{n-1}E&\left\{  2\left[\int_{t_{i}}^{t_{i+1}}\left[J^{\theta}(t,X_{t})-J^{\theta}(t,X_{t-})\right]dN_{t}\right]\right.\\
&\left.\cdot\left[\int_{t_{i}}^{t_{i+1}}\mathcal{L}J^{\theta}(t,X_{t-})dt+\int_{t_{i}}^{t_{i+1}}\frac{\partial J^{\theta}}{\partial x}(t,X_{t-})^{\dagger}\sigma(t,X_{t-})dW_{t}\right]\right\}.
\end{align*}
By H\"{o}lder's inequality, we obtain:
\begin{align*}
    |R_{1}(\theta)|&\leq\sum_{i=0}^{n-1}E\left\{\int_{t_{i}}^{t_{i+1}}\left[\mathcal{L}J^{\theta}(t,X_{t-})\right]^{2}dt\Delta t\right\}\\
    & +2\sum_{i=0}^{n-1}\left\{E\left[\int_{t_{i}}^{t_{i+1}}\mathcal{L}J^{\theta}(t,X_{t-})dt\right]^{2}E\left[\int_{t_{i}}^{t_{i+1}}\frac{\partial J^{\theta}}{\partial x}(t,X_{t-})^{\dagger}\sigma(t,X_{t-})dW_{t}\right]^{2}\right\}^{1/2}\\
    &\leq\sum_{i=0}^{n-1}E\left\{\int_{t_{i}}^{t_{i+1}}\left[\mathcal{L}J^{\theta}(t,X_{t-})\right]^{2}dt\Delta t\right\}\\
    & + 2\sum_{i=0}^{n-1}\left\{E\left[\int_{t_{i}}^{t_{i+1}}\left(\mathcal{L}J^{\theta}(t,X_{t-})\right)^{2}dt\Delta t\right]E\left[\int_{t_{i}}^{t_{i+1}}\left(\frac{\partial J^{\theta}}{\partial x}(t,X_{t-})^{\dagger}\sigma(t,X_{t-})\right)^{2}dt\right]\right\}^{1/2}\\
    & \leq\sum_{i=0}^{n-1}E\left\{\int_{t_{i}}^{t_{i+1}}\left[\mathcal{L}J^{\theta}(t,X_{t-})\right]^{2}dt\Delta t\right\}\\
    & + 2\left\{\sum_{i=0}^{n-1}E\left[\int_{t_{i}}^{t_{i+1}}\left(\mathcal{L}J^{\theta}(t,X_{t-})\right)^{2}dt\Delta t\right]\right.\\
    &\quad\cdot\left.\sum_{i=0}^{n-1}E\left[\int_{t_{i}}^{t_{i+1}}\left(\frac{\partial J^{\theta}}{\partial x}(t,X_{t-})^{\dagger}\sigma(t,X_{t-})\right)^{2}dt\right]\right\}^{1/2}\\
    & = \Delta t\Vert\mathcal{L}J^{\theta}(\cdot,X_{\cdot})\Vert_{L^{2}}^{2}+2\sqrt{\Delta t}\Vert\mathcal{L}J^{\theta}(\cdot,X_{\cdot})\Vert_{L^{2}}\Vert\frac{\partial J^{\theta}}{\partial x}(\cdot,X_{\cdot})^{\dagger}\sigma(\cdot,X_{\cdot})\Vert_{L^{2}},
\end{align*}
where the first line is due to (\ref{holder3}) with $p=2,q=2,f=\mathcal{L}J^{\theta}(t,X_{t-})$ and $g=1$, the third line is due to (\ref{holder3}) with $p=2,q=2,f=\mathcal{L}J^{\theta}(t,X_{t-}),g=1$ and It\^o's isometry, and the fifth line is due to (\ref{holder1}) with $p=2,q=2,u_{i}=\left\{E\left[\int_{t_{i}}^{t_{i+1}}\left(\mathcal{L}J^{\theta}(t,X_{t-})\right)^{2}dt\Delta t\right]\right\}^{1/2}$ and $v_{i}=\left\{E\left[\int_{t_{i}}^{t_{i+1}}\left(\frac{\partial J^{\theta}}{\partial x}(t,X_{t-})^{\dagger}\sigma(t,X_{t-})\right)^{2}dt\right]\right\}^{1/2}$.

 Assumption \ref{ass3} states that the $L^{2}$-norm of $\mathcal{L}J^{\theta}$ and $|\frac{\partial J}{\partial x}\cdot\sigma|^{2}$ are continuous functions of $\theta$, thus, for any compact set $\Gamma$, their supremes are all finite. Hence, for all $\theta\in\Gamma$,
\begin{align*}
    |R_{1}(\theta)|&\leq\Delta t\left[\sup_{\theta\in\Gamma}\Vert\mathcal{L}J^{\theta}(\cdot,X_{\cdot})\Vert_{L^{2}}\right]^{2}\\
&+2\sqrt{\Delta t}\sup_{\theta\in\Gamma}\Vert\mathcal{L}J^{\theta}(\cdot,X_{\cdot})\Vert_{L^{2}}\sup_{\theta\in\Gamma}\Vert\frac{\partial J^{\theta}}{\partial x}(\cdot,X_{\cdot})^{\dagger}\sigma(\cdot,X_{\cdot})\Vert_{L^{2}}\rightarrow 0\text{ as }\Delta t\rightarrow 0,
\end{align*}
Recall that 
\begin{align*}
R_{2}(\theta):=\sum\limits_{i=0}^{n-1}E&\left\{  2\left[\int_{t_{i}}^{t_{i+1}}\left[J^{\theta}(t,X_{t})-J^{\theta}(t,X_{t-})\right]dN_{t}\right]\right.\\
&\left.\cdot\left[\int_{t_{i}}^{t_{i+1}}\mathcal{L}J^{\theta}(t,X_{t-})dt+\int_{t_{i}}^{t_{i+1}}\frac{\partial J^{\theta}}{\partial x}(t,X_{t-})^{\dagger}\sigma(t,X_{t-})dW_{t}\right]\right\},
\end{align*}
we only need to show $R_{2}(\theta)\rightarrow 0$. The desired results follows from the following lemma.

\begin{lemma}\label{lemma9}
\begin{align}
    \sum\limits_{i=0}^{n-1}E&\left\{  \left[\int_{t_{i}}^{t_{i+1}}\left[J^{\theta}(t,X_{t})-J^{\theta}(t,X_{t-})\right]dN_{t}\right]\right.\notag\\
&\left.\cdot\left[\int_{t_{i}}^{t_{i+1}}\mathcal{L}J^{\theta}(t,X_{t-})dt+\int_{t_{i}}^{t_{i+1}}\frac{\partial J^{\theta}}{\partial x}(t,X_{t-})^{\dagger}\sigma(t,X_{t-})dW_{t}\right]\right\}\notag\\
&\leq 2\left\{\sup_{0\leq i\leq n-1}P\left(N_{t_{i+1}}-N_{t_{i}}=1\right)\cdot\sup_{0\leq t\leq T} E\left[J^{\theta}(t,X_{t})\right]^{2}\cdot T\right\}^{1/2}\Vert\mathcal{L}J^{\theta}(\cdot,X_{\cdot})\Vert_{L^{2}}\notag\\
    &+2(C\Delta t)^{\frac{1}{4}}\left\{\sup_{0\leq t\leq T} E\left|J^{\theta}(t,X_{t})\right|^{2}\right\}^{\frac{1}{2}}\left[\left\Vert\left|\frac{\partial J^{\theta}}{\partial x}(\cdot,X_{\cdot})^{\dagger}\sigma(\cdot,X_{\cdot})\right|^{2}\right\Vert_{L^{2}}\right]^{\frac{1}{2}}\notag\\
    &\quad\cdot
    \left[\frac{T}{
    \Delta t}\sup_{0\leq i\leq n-1}P\left(N_{t_{i+1}}-N_{t_{i}}=1\right)\right]^{\frac{3}{4}}.\label{r2}
\end{align}
\end{lemma}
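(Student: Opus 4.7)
The plan is to introduce the shorthand
\[
A_i:=\int_{t_i}^{t_{i+1}}[J^\theta(t,X_t)-J^\theta(t,X_{t-})]\,dN_t,\quad
B_i:=\int_{t_i}^{t_{i+1}}\mathcal{L}J^\theta(t,X_{t-})\,dt,\quad
C_i:=\int_{t_i}^{t_{i+1}}\bigl(\tfrac{\partial J^\theta}{\partial x}(t,X_{t-})\bigr)^{\dagger}\sigma(t,X_{t-})\,dW_t,
\]
so that the left-hand side of the claim is $\sum_i E[A_i(B_i+C_i)]$. By the triangle inequality it is bounded by $\sum_i|E[A_iB_i]|+\sum_i|E[A_iC_i]|$, and I would handle the two pieces with different H\"older pairings: $(p,q)=(2,2)$ for the drift piece and $(p,q)=(4/3,4)$ for the martingale piece. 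The asymmetry is forced by the target: the second summand of the bound involves $\Vert|\frac{\partial J^\theta}{\partial x}^{\dagger}\sigma|^{2}\Vert_{L^2}^{1/2}$, which is the $L^4$-norm of $\frac{\partial J^\theta}{\partial x}^{\dagger}\sigma$ and so calls for BDG with $p=4$ applied to the It\^o integral $C_i$.

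For the drift piece I would apply Cauchy-Schwarz in expectation and then discretely across $i$, reducing to $(\sum EA_i^2)^{1/2}(\sum EB_i^2)^{1/2}$. The $B_i$-sum is controlled by a routine integral Cauchy-Schwarz giving $\sum EB_i^2\le\Delta t\,\Vert\mathcal{L}J^\theta(\cdot,X_\cdot)\Vert_{L^2}^2$. For the $A_i$-sum I would invoke identity (\ref{jumpproof}) with $K=2$ together with $(a-b)^2\le 2(a^2+b^2)$ and the at-most-one-jump property from Assumption \ref{ass1}(v) to extract $EA_i^2\le 4\sup_tE[J^\theta(t,X_t)^2]\,P(N_{t_{i+1}}-N_{t_i}=1)$, whence $\sum EA_i^2\le 4(T/\Delta t)\sup_iP_i\sup_tEJ^2$. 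Combining these should deliver the first summand of the statement, with the leading $2$ appearing as $\sqrt{4}$.

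For the martingale piece I would apply H\"older with exponents $(4/3,4)$ in expectation and then across $i$, reducing to $(\sum E|A_i|^{4/3})^{3/4}(\sum E|C_i|^4)^{1/4}$. The BDG inequality of Lemma \ref{lemma4} with $p=4$ yields $E|C_i|^4\le C_4\,E\bigl(\langle C\rangle_{t_{i+1}}-\langle C\rangle_{t_i}\bigr)^{2}\le C_4\Delta t\,E\!\int_{t_i}^{t_{i+1}}|\frac{\partial J^\theta}{\partial x}^{\dagger}\sigma|^{4}dt$, and summation gives the factor $(C\Delta t)^{1/4}\Vert|\frac{\partial J^\theta}{\partial x}^{\dagger}\sigma|^{2}\Vert_{L^2}^{1/2}$. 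For $A_i$, the absolute-value analogue of (\ref{jumpproof}) with $K=4/3$ together with the single-jump property, $(|a|+|b|)^{4/3}\le 2^{1/3}(|a|^{4/3}+|b|^{4/3})$, and the Jensen bound $E|J|^{4/3}\le(E|J|^2)^{2/3}$ should give $E|A_i|^{4/3}\le C\,P_i(\sup_tE|J^\theta(t,X_t)|^{2})^{2/3}$; summing and raising to the $3/4$ power then produces exactly the $(T/\Delta t\sup P)^{3/4}(\sup EJ^2)^{1/2}$ factor.

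The hard part will be the moment control of $A_i$. Because the jump time inside $[t_i,t_{i+1}]$ is random, passing from the conditional moment of $J^\theta$ at the random jump time to the uniform quantity $\sup_tE[J^\theta(t,X_t)^2]$ appearing in the statement is delicate. This step relies on Assumption \ref{ass1}(v) to restrict to at most one jump per interval, on the polynomial growth of $J^\theta$ coming from Assumptions \ref{ass1}(iv) and \ref{ass3} to bound $J^\theta(s_i,X_{s_i})$ at the post-jump state, and on the independence of $N$ from $W$ so that conditioning on the jump event does not distort the law of the continuous trajectory up to the jump. Once these uniform moment estimates are secured, reassembly via the triangle inequality delivers the two-term bound as stated.
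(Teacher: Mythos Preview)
Your proposal is correct and follows essentially the same route as the paper: split into drift and martingale pieces, apply H\"older with exponents $(2,2)$ on the drift term and $(4/3,4)$ on the martingale term, invoke identity (\ref{jumpproof}) for the $A_i$-moments, BDG with $p=4$ for the It\^o integral, and then a discrete H\"older across $i$ to reassemble. The only cosmetic difference is the order in which you apply the discrete H\"older versus pulling out $\sup_iP_i$; the paper extracts the supremum first and then sums, whereas you sum first, but both lead to the identical bound with the same constants.
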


\begin{proof}
By applying H\"{o}lder's inequality in the first summand of the LHS of (\ref{r2}), we obtain:
\begin{align*}
    &\sum\limits_{i=0}^{n-1}E\left\{  \left[\int_{t_{i}}^{t_{i+1}}\left[J^{\theta}(t,X_{t})-J^{\theta}(t,X_{t-})\right]dN_{t}\right]\int_{t_{i}}^{t_{i+1}}\mathcal{L}J^{\theta}(t,X_{t-})dt\right\}\\
    &\leq\sum_{i=0}^{n-1}\left\{E\left[\int_{t_{i}}^{t_{i+1}}\left[J^{\theta}(t,X_{t})-J^{\theta}(t,X_{t-})\right]dN_{t}\right]^{2}E\left[\int_{t_{i}}^{t_{i+1}}\mathcal{L}J^{\theta}(t,X_{t-})dt\right]^{2}\right\}^{1/2}\\
    &=\sum_{i=0}^{n-1}\left\{P\left(N_{t_{i+1}}-N_{t_{i}}=1\right)E\left[J^{\theta}(s_{i},X_{s_{i}})-J^{\theta}(s_{i}-,X_{s_{i}-})\right]^{2}E\left[\int_{t_{i}}^{t_{i+1}}\mathcal{L}J^{\theta}(t,X_{t-})dt\right]^{2}\right\}^{1/2}\\
    &\leq \left\{\sup_{0\leq i\leq n-1}P\left(N_{t_{i+1}}-N_{t_{i}}=1\right)\cdot4\sup_{0\leq t\leq T} E\left[J^{\theta}(t,X_{t})\right]^{2}\right\}^{1/2}\sum_{i=0}^{n-1}\left\{E\left[\int_{t_{i}}^{t_{i+1}}\mathcal{L}J^{\theta}(t,X_{t-})dt\right]^{2}\right\}^{1/2}\\
    &\leq\left\{\sup_{0\leq i\leq n-1}P\left(N_{t_{i+1}}-N_{t_{i}}=1\right)\cdot4\sup_{0\leq t\leq T} E\left[J^{\theta}(t,X_{t})\right]^{2}\right\}^{1/2}\\
    &\quad\cdot\sum_{i=0}^{n-1}\left\{E\left[\int_{t_{i}}^{t_{i+1}}\left[\mathcal{L}J^{\theta}(t,X_{t-})\right]^{2}dt\Delta t\right]\right\}^{1/2}\\
    &\leq\left\{\sup_{0\leq i\leq n-1}P\left(N_{t_{i+1}}-N_{t_{i}}=1\right)\cdot4\sup_{0\leq t\leq T} E\left[J^{\theta}(t,X_{t})\right]^{2}\right\}^{1/2}\\
    &\quad\cdot\left\{\sum_{i=0}^{n-1}E\left[\int_{t_{i}}^{t_{i+1}}\left[\mathcal{L}J^{\theta}(t,X_{t-})\right]^2 dt\right]\sum_{i=0}^{n-1}\Delta t\right\}^{1/2}\\
    &= 2\left\{\sup_{0\leq i\leq n-1}P\left(N_{t_{i+1}}-N_{t_{i}}=1\right)\cdot\sup_{0\leq t\leq T} E\left[J^{\theta}(t,X_{t})\right]^{2}\cdot T\right\}^{1/2}\Vert\mathcal{L}J^{\theta}(\cdot,X_{\cdot})\Vert_{L^{2}},
\end{align*}
where the second line is due to (\ref{holder2}) with $p=2,q=2,Y_{i}=\int_{t_{i}}^{t_{i+1}}\left[J^{\theta}(t,X_{t})-J^{\theta}(t,X_{t-})\right]dN_{t}$ and $Z_{i}=\int_{t_{i}}^{t_{i+1}}\mathcal{L}J^{\theta}(t,X_{t-})dt$, the third line is due to (\ref{jumpproof}), the fifth line is due to (\ref{holder3}) with $p=2,q=2,f=\mathcal{L}J^{\theta}(t,X_{t-})$ and $g=1$, and the seventh line is due to (\ref{holder1}) with $p=2,q=2,u_{i}=\left\{E\left[\int_{t_{i}}^{t_{i+1}}\left(\mathcal{L}J^{\theta}(t,X_{t-})\right)^{2}dt\right]\right\}^{1/2}$ and $v_{i}=(\Delta t)^{1/2}$.
Then for the second summand of the LHS of (\ref{r2}),
\begin{align*}
    &\sum\limits_{i=0}^{n-1}E\left\{  \left[\int_{t_{i}}^{t_{i+1}}\left[J^{\theta}(t,X_{t})-J^{\theta}(t,X_{t-})\right]dN_{t}\right]\int_{t_{i}}^{t_{i+1}}\frac{\partial J^{\theta}}{\partial x}(t,X_{t-})^{\dagger}\sigma(t,X_{t-})dW_{t}\right\}\\
    &\leq\sum_{i=0}^{n-1}\left[E\left|\int_{t_{i}}^{t_{i+1}}\left[J^{\theta}(t,X_{t})-J^{\theta}(t,X_{t-})\right]dN_{t}\right|^{\frac{4}{3}}\right]^{\frac{3}{4}}\left[E\left|\int_{t_{i}}^{t_{i+1}}\frac{\partial J^{\theta}}{\partial x}(t,X_{t-})^{\dagger}\sigma(t,X_{t-})dW_{t}\right|^{4}\right]^{\frac{1}{4}}
    \\
    &=\sum_{i=0}^{n-1}\Bigg\{\left[P\left(N_{t_{i+1}}-N_{t_{i}}=1\right)E\left|J^{\theta}(s_{i},X_{s_{i}})-J^{\theta}(s_{i}-,X_{s_{i}-})\right|^{\frac{4}{3}}\right]^{\frac{3}{4}}\\
    &\quad\cdot\left[E\left|\int_{t_{i}}^{t_{i+1}}\frac{\partial J^{\theta}}{\partial x}(t,X_{t-})^{\dagger}\sigma(t,X_{t-})dW_{t}\right|^{4}\right]^{\frac{1}{4}}\Bigg\}\\
    &\leq \sum_{i=0}^{n-1}\Bigg\{\left[P\left(N_{t_{i+1}}-N_{t_{i}}=1\right)E\left|J^{\theta}(s_{i},X_{s_{i}})-J^{\theta}(s_{i}-,X_{s_{i}-})\right|^{\frac{4}{3}}\right]^{\frac{3}{4}}\\
    &\quad\cdot \left[CE\left(\int_{t_{i}}^{t_{i+1}}\left|\frac{\partial J^{\theta}}{\partial x}(t,X_{t-})^{\dagger}\sigma(t,X_{t-})\right|^{2}dt\right)^{2}\right]^{1/4}\Bigg\}\\
    &\leq C^{\frac{1}{4}}\left\{\sup_{0\leq i\leq n-1}P\left(N_{t_{i+1}}-N_{t_{i}}=1\right)\cdot\sup_{0\leq t\leq T} E\left|2J^{\theta}(t,X_{t})\right|^{\frac{4}{3}}\right\}^{\frac{3}{4}}\\
    &\quad\cdot\sum_{i=0}^{n-1}\left[E\left(\int_{t_{i}}^{t_{i+1}}\left|\frac{\partial J^{\theta}}{\partial x}(t,X_{t-})^{\dagger}\sigma(t,X_{t-})\right|^{2}dt\right)^{2}\right]^{1/4}\\
    &\leq C^{\frac{1}{4}}\left\{\sup_{0\leq i\leq n-1}P\left(N_{t_{i+1}}-N_{t_{i}}=1\right)\cdot\sup_{0\leq t\leq T} E\left|2J^{\theta}(t,X_{t})\right|^{\frac{4}{3}}\right\}^{\frac{3}{4}}\\
    &\quad\cdot\sum_{i=0}^{n-1}\left[E\left(\int_{t_{i}}^{t_{i+1}}\left|\frac{\partial J^{\theta}}{\partial x}(t,X_{t-})^{\dagger}\sigma(t,X_{t-})\right|^{4}dt\right)\Delta t\right]^{1/4}\\
    &\leq (C\Delta t)^{\frac{1}{4}}\left\{\sup_{0\leq i\leq n-1}P\left(N_{t_{i+1}}-N_{t_{i}}=1\right)\cdot\sup_{0\leq t\leq T} E\left|2J^{\theta}(t,X_{t})\right|^{\frac{4}{3}}\right\}^{\frac{3}{4}}\\
    &\quad\cdot\left[\left(\sum_{i=0}^{n-1}E\int_{t_{i}}^{t_{i+1}}\left|\frac{\partial J^{\theta}}{\partial x}(t,X_{t-})^{\dagger}\sigma(t,X_{t-})\right|^{4}dt\right)^{1/4}\left(\sum_{i=0}^{n-1}1\right)^{\frac{3}{4}}\right]\\
    &\leq2(C\Delta t)^{\frac{1}{4}}\left\{\sup_{0\leq t\leq T} E\left|J^{\theta}(t,X_{t})\right|^{2}\right\}^{\frac{1}{2}}\left[\left\Vert\left|\frac{\partial J^{\theta}}{\partial x}(\cdot,X_{\cdot})^{\dagger}\sigma(\cdot,X_{\cdot})\right|^{2}\right\Vert_{L^{2}}\right]^{\frac{1}{2}}\\
    &\quad\cdot
    \left[\frac{T}{
    \Delta t}\sup_{0\leq i\leq n-1}P\left(N_{t_{i+1}}-N_{t_{i}}=1\right)\right]^{\frac{3}{4}},
\end{align*}
where the second line is due to (\ref{holder2}) with $p=4/3,q=4,Y_{i}=\int_{t_{i}}^{t_{i+1}}\left[J^{\theta}(t,X_{t})-J^{\theta}(t,X_{t-})\right]dN_{t}$ and $Z_{i}=\int_{t_{i}}^{t_{i+1}}\frac{\partial J^{\theta}}{\partial x}(t,X_{t-})^{\dagger}\sigma(t,X_{t-})dW_{t}$, the third line is due to (\ref{jumpproof}) with $K=4/3$, the fifth line is due to (\ref{bdg}) with $p=4,(M_{i})_{t}=\int_{t_{i}}^{t_{i}+t}\frac{\partial J^{\theta}}{\partial x}(t,X_{t-})^{\dagger}\sigma(t,X_{t-})dW_{t}$ and $C$ is a positive constant, the ninth line is due to (\ref{holder3}) with $p=2,q=2,f=\left|\frac{\partial J^{\theta}}{\partial x}(t,X_{t-})^{\dagger}\sigma(t,X_{t-})\right|^{2}$ and $g=1$, the eleventh line is due to (\ref{holder1}) with $p=4,q=4/3,u_{i}=\left(E\int_{t_{i}}^{t_{i+1}}\left|\frac{\partial J^{\theta}}{\partial x}(t,X_{t-})^{\dagger}\sigma(t,X_{t-})\right|^{4}\right)^{1/4}$ and $v_{i}=1$, and the thirteenth line is due to (\ref{holder2}) with $p=3/2,q=3,Y=|J^{\theta}(t,X_{t})|^{\frac{4}{3}},Z=1$ and the fact that $T=n\Delta t$.
Hence, we have:
\begin{align*}
    \sum\limits_{i=0}^{n-1}E&\left\{  \left[\int_{t_{i}}^{t_{i+1}}\left[J^{\theta}(t,X_{t})-J^{\theta}(t,X_{t-})\right]dN_{t}\right]\right.\\
&\left.\cdot\left[\int_{t_{i}}^{t_{i+1}}\mathcal{L}J^{\theta}(t,X_{t-})dt+\int_{t_{i}}^{t_{i+1}}\frac{\partial J^{\theta}}{\partial x}(t,X_{t-})^{\dagger}\sigma(t,X_{t-})dW_{t}\right]\right\}\\
&\leq 2\left\{\sup_{0\leq i\leq n-1}P\left(N_{t_{i+1}}-N_{t_{i}}=1\right)\cdot\sup_{0\leq t\leq T} E\left[J^{\theta}(t,X_{t})\right]^{2}\cdot T\right\}^{1/2}\Vert\mathcal{L}J^{\theta}(\cdot,X_{\cdot})\Vert_{L^{2}}\\
    &+2(C\Delta t)^{\frac{1}{4}}\left\{\sup_{0\leq t\leq T} E\left|J^{\theta}(t,X_{t})\right|^{2}\right\}^{\frac{1}{2}}\left[\left\Vert\left|\frac{\partial J^{\theta}}{\partial x}(\cdot,X_{\cdot})^{\dagger}\sigma(\cdot,X_{\cdot})\right|^{2}\right\Vert_{L^{2}}\right]^{\frac{1}{2}}\\
    &\quad\cdot
    \left[\frac{T}{
    \Delta t}\sup_{0\leq i\leq n-1}P\left(N_{t_{i+1}}-N_{t_{i}}=1\right)\right]^{\frac{3}{4}}.
\end{align*}
This completes the proof of Lemma \ref{lemma9}.
\end{proof}

Assumption \ref{ass3} implies that for any compact set $\Gamma$, $\sup_{0\leq t\leq T} \left\{E\left|J^{\theta}(t,X_{t})\right|^{2}\right\}^{\frac{1}{2}}$ is bounded for all $\theta\in\Gamma$. Assumption \ref{ass1}(v) states that $\sup\limits_{0\leq i\leq n-1}P\left(N_{t_{i+1}}-N_{t_{i}}=1\right)=\mathcal{O}(\Delta t)$. Hence, for all $\theta\in\Gamma$, by Lemma \ref{lemma9}, we have the following:
\begin{align*}
    &|R_{2}(\theta)|=2\sum\limits_{i=0}^{n-1}E\left\{  \left[\int_{t_{i}}^{t_{i+1}}\left[J^{\theta}(t,X_{t})-J^{\theta}(t,X_{t-})\right]dN_{t}\right]\right.\\
    &\left.\cdot\left[\int_{t_{i}}^{t_{i+1}}\mathcal{L}J^{\theta}(t,X_{t-})dt+\int_{t_{i}}^{t_{i+1}}\frac{\partial J^{\theta}}{\partial x}(t,X_{t-})^{\dagger}\sigma(t,X_{t-})dW_{t}\right]\right\}\\
    &\leq 4\left\{\sup_{0\leq i\leq n-1}P\left(N_{t_{i+1}}-N_{t_{i}}=1\right)\cdot\sup_{0\leq t\leq T} E\left[J^{\theta}(t,X_{t})\right]^{2}\cdot T\right\}^{1/2}\Vert\mathcal{L}J^{\theta}(\cdot,X_{\cdot})\Vert_{L^{2}}\\
    &+4(C\Delta t)^{\frac{1}{4}}\left\{\sup_{0\leq t\leq T} E\left|J^{\theta}(t,X_{t})\right|^{2}\right\}^{\frac{1}{2}}\left[\left\Vert\left|\frac{\partial J^{\theta}}{\partial x}(\cdot,X_{\cdot})^{\dagger}\sigma(\cdot,X_{\cdot})\right|^{2}\right\Vert_{L^{2}}\right]^{\frac{1}{2}}\\
    &\quad\cdot
    \left[\frac{T}{
    \Delta t}\sup_{0\leq i\leq n-1}P\left(N_{t_{i+1}}-N_{t_{i}}=1\right)\right]^{\frac{3}{4}}\rightarrow 0\quad\text{ as }\Delta t\rightarrow 0.
\end{align*}
The desired result follows from Lemma \ref{lemma3}.
\end{proof}

Now, we present the proof of Theorem \ref{thm2}.
\begin{proof}
Let us start by highlighting an useful fact. First fix a sufficiently small time grid size $\Delta t$, by Assumption \ref{ass1}(v), when $\Delta t$ is small enough, there is at most one jump within each set $[t_{i-1},t_{i}]\cup[t_{i},t_{i+1}]$, hence at least one of $\int_{t_{i}}^{t_{i+1}}|J^{\theta}\left(t,X_{t}\right)-J^{\theta}\left(t,X_{t-}\right)|dN_{t}$ and $\int_{t_{i-1}}^{t_{i}}|J^{\theta}\left(t,X_{t}\right)-J^{\theta}\left(t,X_{t-}\right)|dN_{t}$ is zero. Thus,
\begin{align}\label{jumpproof2}
    \left[\int_{t_{i}}^{t_{i+1}}\left|J^{\theta}(t,X_{t})-J^{\theta}(t,X_{t-})\right|dN_{t}\right]\left[\int_{t_{i-1}}^{t_{i}}\left|J^{\theta}(t,X_{t})-J^{\theta}(t,X_{t-})\right|dN_{t}\right]=0.
\end{align}
Now let's prove the theorem. By It\^o's formula on $J^{\theta}(t_{i+1},X_{t_{i+1}})$, we have
\begin{align*}
    & \sum\limits_{i=1}^{n-1}\left|J^{\theta}(t_{i+1},X_{t_{i+1}})-J^{\theta}(t_{i},X_{t_{i}})\right|\left|J^{\theta}(t_{i},X_{t_{i}})-J^{\theta}(t_{i-1},X_{t_{i-1}})\right|\\
    & = \sum\limits_{i=1}^{n-1}\left|\int_{t_{i}}^{t_{i+1}}\mathcal{L}J^{\theta}(t,X_{t-})dt+\int_{t_{i}}^{t_{i+1}}\frac{\partial J^{\theta}}{\partial x}(t,X_{t-})^{\dagger}\sigma(t,X_{t-})dW_{t}\right.\\
    &+\left.\int_{t_{i}}^{t_{i+1}}\left[J^{\theta}(t,X_{t})-J^{\theta}(t,X_{t-})\right]dN_{t}\right|\\
    &\cdot\left|\int_{t_{i-1}}^{t_{i}}\mathcal{L}J^{\theta}(t,X_{t-})dt+\int_{t_{i-1}}^{t_{i}}\frac{\partial J^{\theta}}{\partial x}(t,X_{t-})^{\dagger}\sigma(t,X_{t-})dW_{t}+\int_{t_{i-1}}^{t_{i}}\left[J^{\theta}(t,X_{t})-J^{\theta}(t,X_{t-})\right]dN_{t}\right|.
\end{align*}
We write $\text{MSBVE}_{\Delta t}(\theta)=BV(\theta)+\tilde{R}_{1}(\theta)+\tilde{R}_{2}(\theta)$, where
\[BV(\theta):=\frac{2}{\pi}E\left\{\int_{0}^{T}\left[\frac{\partial J^{\theta}}{\partial x}(t,X_{t-})^{\dagger}\sigma(t,X_{t-})\right]^{2}dt\right\},\]
\begin{align*}
    \tilde{R}_{1}(\theta):= & E\left[\sum\limits_{i=1}^{n-1}\left|\int_{t_{i}}^{t_{i+1}}\mathcal{L}J^{\theta}(t,X_{t-})dt+\int_{t_{i}}^{t_{i+1}}\frac{\partial J^{\theta}}{\partial x}(t,X_{t-})^{\dagger}\sigma(t,X_{t-})dW_{t}\right|\right.\\
    & \left.\cdot\left|\int_{t_{i-1}}^{t_{i}}\mathcal{L}J^{\theta}(t,X_{t-})dt+\int_{t_{i-1}}^{t_{i}}\frac{\partial J^{\theta}}{\partial x}(t,X_{t-})^{\dagger}\sigma(t,X_{t-})dW_{t}\right|\right]-BV(\theta),
\end{align*}
and
\begin{align*}
    \tilde{R}_{2}(\theta):= & E\left[\sum\limits_{i=1}^{n-1}\left|\int_{t_{i}}^{t_{i+1}}\mathcal{L}J^{\theta}(t,X_{t-})dt+\int_{t_{i}}^{t_{i+1}}\frac{\partial J^{\theta}}{\partial x}(t,X_{t-})^{\dagger}\sigma(t,X_{t-})dW_{t}\right.\right.\\
    &+\left.\int_{t_{i}}^{t_{i+1}}\left[J^{\theta}(t,X_{t})-J^{\theta}(t,X_{t-})\right]dN_{t}\right|\\
    &\cdot\left|\int_{t_{i-1}}^{t_{i}}\mathcal{L}J^{\theta}(t,X_{t-})dt+\int_{t_{i-1}}^{t_{i}}\frac{\partial J^{\theta}}{\partial x}(t,X_{t-})^{\dagger}\sigma(t,X_{t-})dW_{t}\right.\\
    &+\left.\left.\int_{t_{i-1}}^{t_{i}}\left[J^{\theta}(t,X_{t})-J^{\theta}(t,X_{t-})\right]dN_{t}\right|\right]\\
    &-E\left[\sum\limits_{i=1}^{n-1}\left|\int_{t_{i}}^{t_{i+1}}\mathcal{L}J^{\theta}(t,X_{t-})dt+\int_{t_{i}}^{t_{i+1}}\frac{\partial J^{\theta}}{\partial x}(t,X_{t-})^{\dagger}\sigma(t,X_{t-})dW_{t}\right|\right.\\
    & \left.\cdot\left|\int_{t_{i-1}}^{t_{i}}\mathcal{L}J^{\theta}(t,X_{t-})dt+\int_{t_{i-1}}^{t_{i}}\frac{\partial J^{\theta}}{\partial x}(t,X_{t-})^{\dagger}\sigma(t,X_{t-})dW_{t}\right|\right]
\end{align*}
By Lemma \ref{lemma8}(b), we have:
\begin{align*}
    & Y_{n}:=\sum\limits_{i=1}^{n-1}|\int_{t_{i}}^{t_{i+1}}\mathcal{L}J^{\theta}(t,X_{t-})dt+\int_{t_{i}}^{t_{i+1}}\frac{\partial J^{\theta}}{\partial x}(t,X_{t-})^{\dagger}\sigma(t,X_{t-})dW_{t}|\\
    & \cdot|\int_{t_{i-1}}^{t_{i}}\mathcal{L}J^{\theta}(t,X_{t-})dt+\int_{t_{i-1}}^{t_{i}}\frac{\partial J^{\theta}}{\partial x}(t,X_{t-})^{\dagger}\sigma(t,X_{t-})dW_{t}|\\
    &-\frac{2}{\pi}\int_{0}^{T}\left[\frac{\partial J^{\theta}}{\partial x}(t,X_{t-})^{\dagger}\sigma(t,X_{t-})\right]^{2}dt\stackrel{p}\rightarrow 0.
\end{align*}
This implies $Y_{n}\stackrel{d}\rightarrow 0$. Notice that
\begin{align*}
    &E|Y_{n}|^{2}\\
    &\leq 2E\left|\sum\limits_{i=1}^{n-1}|\int_{t_{i}}^{t_{i+1}}\mathcal{L}J^{\theta}(t,X_{t-})dt+\int_{t_{i}}^{t_{i+1}}\frac{\partial J^{\theta}}{\partial x}(t,X_{t-})^{\dagger}\sigma(t,X_{t-})dW_{t}|\right.\\
    & \left.\cdot|\int_{t_{i-1}}^{t_{i}}\mathcal{L}J^{\theta}(t,X_{t-})dt+\int_{t_{i-1}}^{t_{i}}\frac{\partial J^{\theta}}{\partial x}(t,X_{t-})^{\dagger}\sigma(t,X_{t-})dW_{t}|\right|^{2}\\
    &+2E\left[\frac{2}{\pi}\int_{0}^{T}\left(\frac{\partial J^{\theta}}{\partial x}(t,X_{t-})^{\dagger}\sigma(t,X_{t-})\right)^{2}dt\right]^{2}\\
    &\leq 2E\left[\sum\limits_{i=1}^{n-1}\left|\int_{t_{i}}^{t_{i+1}}\mathcal{L}J^{\theta}(t,X_{t-})dt+\int_{t_{i}}^{t_{i+1}}\frac{\partial J^{\theta}}{\partial x}(t,X_{t-})^{\dagger}\sigma(t,X_{t-})dW_{t}\right|^{2}\right.\\
    &\cdot\left.\sum\limits_{i=1}^{n-1}\left|\int_{t_{i-1}}^{t_{i}}\mathcal{L}J^{\theta}(t,X_{t-})dt+\int_{t_{i-1}}^{t_{i}}\frac{\partial J^{\theta}}{\partial x}(t,X_{t-})^{\dagger}\sigma(t,X_{t-})dW_{t}\right|^{2}\right]\\
    &+\frac{8}{\pi^{2}}E\left[\int_{0}^{T}\left(\frac{\partial J^{\theta}}{\partial x}(t,X_{t-})^{\dagger}\sigma(t,X_{t-})\right)^{4}dt\cdot T\right]\\
    &\leq 2E\left[\sum\limits_{i=0}^{n-1}\left|\int_{t_{i}}^{t_{i+1}}\mathcal{L}J^{\theta}(t,X_{t-})dt+\int_{t_{i}}^{t_{i+1}}\frac{\partial J^{\theta}}{\partial x}(t,X_{t-})^{\dagger}\sigma(t,X_{t-})dW_{t}\right|^{2}\right]^{2}\\
    &+\frac{8}{\pi^{2}}E\left[\int_{0}^{T}\left(\frac{\partial J^{\theta}}{\partial x}(t,X_{t-})^{\dagger}\sigma(t,X_{t-})\right)^{4}dt\cdot T\right]\\
    &\leq 2n\sum_{i=0}^{n-1}E\left|\int_{t_{i}}^{t_{i+1}}\mathcal{L}J^{\theta}(t,X_{t-})dt+\int_{t_{i}}^{t_{i+1}}\frac{\partial J^{\theta}}{\partial x}(t,X_{t-})^{\dagger}\sigma(t,X_{t-})dW_{t}\right|^{4}\\
    &+\frac{8}{\pi^{2}}E\left[\int_{0}^{T}\left(\frac{\partial J^{\theta}}{\partial x}(t,X_{t-})^{\dagger}\sigma(t,X_{t-})\right)^{4}dt\cdot T\right]\\
    &\leq 16n\sum_{i=0}^{n-1}\left(E\left|\int_{t_{i}}^{t_{i+1}}\mathcal{L}J^{\theta}(t,X_{t-})dt\right|^{4}+E\left|\int_{t_{i}}^{t_{i+1}}\frac{\partial J^{\theta}}{\partial x}(t,X_{t-})^{\dagger}\sigma(t,X_{t-})dW_{t}\right|^{4}\right)\\
    &+\frac{8}{\pi^{2}}E\left[\int_{0}^{T}\left(\frac{\partial J^{\theta}}{\partial x}(t,X_{t-})^{\dagger}\sigma(t,X_{t-})\right)^{4}dt\cdot T\right]\\
    &\leq 16n\sum_{i=0}^{n-1}\left[E\left(\int_{t_{i}}^{t_{i+1}}\left(\mathcal{L}J^{\theta}(t,X_{t-})\right)^{4}dt\cdot(\Delta t)^{3}\right)\right.\\
    &\left.+CE\left(\int_{t_{i}}^{t_{i+1}}\left|\frac{\partial J^{\theta}}{\partial x}(t,X_{t-})^{\dagger}\sigma(t,X_{t-})\right|^{2}dt\right)^{2}\right]\\
    &+\frac{8}{\pi^{2}}E\left[\int_{0}^{T}\left(\frac{\partial J^{\theta}}{\partial x}(t,X_{t-})^{\dagger}\sigma(t,X_{t-})\right)^{4}dt\cdot T\right]\\
    &\leq 16n\sum_{i=0}^{n-1}\left[(\Delta t)^{3}E\int_{t_{i}}^{t_{i+1}}\left(\mathcal{L}J^{\theta}(t,X_{t-})\right)^{4}dt+C(\Delta t) E\int_{t_{i}}^{t_{i+1}}\left|\frac{\partial J^{\theta}}{\partial x}(t,X_{t-})^{\dagger}\sigma(t,X_{t-})\right|^{4}dt\right]\\
    &+\frac{8}{\pi^{2}}E\left[\int_{0}^{T}\left(\frac{\partial J^{\theta}}{\partial x}(t,X_{t-})^{\dagger}\sigma(t,X_{t-})\right)^{4}dt\cdot T\right]\\
    &=16n(\Delta t)\left[(\Delta t)^{2}E\int_{0}^{T}(\mathcal{L}J^{\theta}(t,X_{t-}))^{4}dt+CE\int_{0}^{T}\left|\frac{\partial J^{\theta}}{\partial x}(t,X_{t-})^{\dagger}\sigma(t,X_{t-})\right|^{4}dt\right]\\
    &+\frac{8}{\pi^{2}}E\left[\int_{0}^{T}\left(\frac{\partial J^{\theta}}{\partial x}(t,X_{t-})^{\dagger}\sigma(t,X_{t-})\right)^{4}dt\cdot T\right]\\
    &=T\left[16(\Delta t)^{2}\Vert\mathcal{L}J^{\theta}(\cdot,X_{\cdot})^{2}\Vert_{L_{2}}^{2}+(16C+\frac{8}{\pi^{2}})\left\Vert\left|\frac{\partial J^{\theta}}{\partial x}(\cdot,X_{\cdot})^{\dagger}\sigma(\cdot,X_{\cdot})\right|^{2}\right\Vert_{L^{2}}^{2}\right]<\infty,
\end{align*}
where the second line is due to (\ref{holder1}) with 
\begin{align*}
    & u_{1}=\left|\sum\limits_{i=1}^{n-1}|\int_{t_{i}}^{t_{i+1}}\mathcal{L}J^{\theta}(t,X_{t-})dt+\int_{t_{i}}^{t_{i+1}}\frac{\partial J^{\theta}}{\partial x}(t,X_{t-})^{\dagger}\sigma(t,X_{t-})dW_{t}|\right.\\
    & \left.\cdot|\int_{t_{i-1}}^{t_{i}}\mathcal{L}J^{\theta}(t,X_{t-})dt+\int_{t_{i-1}}^{t_{i}}\frac{\partial J^{\theta}}{\partial x}(t,X_{t-})^{\dagger}\sigma(t,X_{t-})dW_{t}|\right|,\\
    & u_{2}=\left[\frac{2}{\pi}\int_{0}^{T}\left(\frac{\partial J^{\theta}}{\partial x}(t,X_{t-})^{\dagger}\sigma(t,X_{t-})\right)^{2}dt\right],v_{1}=v_{2}=1,p=q=2,
\end{align*}
the fifth line is due to (\ref{holder1}) with $u_{i}=\int_{t_{i}}^{t_{i+1}}\mathcal{L}J^{\theta}(t,X_{t-})dt+\int_{t_{i}}^{t_{i+1}}\frac{\partial J^{\theta}}{\partial x}(t,X_{t-})^{\dagger}\sigma(t,X_{t-})dW_{t},v_{i}=\int_{t_{i-1}}^{t_{i}}\mathcal{L}J^{\theta}(t,X_{t-})dt+\int_{t_{i-1}}^{t_{i}}\frac{\partial J^{\theta}}{\partial x}(t,X_{t-})^{\dagger}\sigma(t,X_{t-})dW_{t},p=q=2$ and (\ref{holder3}) with $f=\left(\frac{\partial J^{\theta}}{\partial x}(t,X_{t-})^{\dagger}\sigma(t,X_{t-})\right)^{2},$ $g=1,p=q=2$, the eighth line is due to adding extra positive terms, the tenth line is due to (\ref{holder1}) with $u_{i}=\left|\int_{t_{i}}^{t_{i+1}}\mathcal{L}J^{\theta}(t,X_{t-})dt+\int_{t_{i}}^{t_{i+1}}\frac{\partial J^{\theta}}{\partial x}(t,X_{t-})^{\dagger}\sigma(t,X_{t-})dW_{t}\right|^{2},v_{i}=1,p=q=2$, the twelfth line is due to $(u_{1}+u_{2})^{4}\leq 8(u_{1}^{4}+u_{2}^{4})$ with $u_{1}=\int_{t_{i}}^{t_{i+1}}\mathcal{L}J^{\theta}(t,X_{t-})dt$ and $u_{2}=\int_{t_{i}}^{t_{i+1}}\frac{\partial J^{\theta}}{\partial x}(t,X_{t-})^{\dagger}\sigma(t,X_{t-})dW_{t}$, the fourteenth line is due to (\ref{holder3}) with $f=\mathcal{L}J^{\theta}(t,X_{t-}),g=1,p=4,q=4/3$ and (\ref{bdg}) with $p=4,(M_{i})_{t}=\int_{t_{i}}^{t_{i}+t}\frac{\partial J^{\theta}}{\partial x}(t,X_{t-})^{\dagger}\sigma(t,X_{t-})dW_{t}$ and $C$ is a positive constant, the seventeenth line is due to (\ref{holder3}) with $f=\left|\frac{\partial J^{\theta}}{\partial x}(t,X_{t-})^{\dagger}\sigma(t,X_{t-})\right|^{2},g=1,p=q=2$, and the twenty-first line is due to the fact that $T=n(\Delta t)$.

 By Lemma \ref{lemma7}, $\sup_{n}E|Y_{n}|^{2}<\infty$ implies that $\{|Y_{n}|\}$ is uniformly integrable. Together with $Y_{n}\stackrel{d}\rightarrow 0$, this results in $E|Y_{n}|\rightarrow 0$, i.e., $|\tilde{R}_{1}(\theta)|\rightarrow 0$.

 Now let's prove $|\tilde{R}_{2}(\theta)|\rightarrow 0$. Let
\begin{align*}
    &a_{i}=\int_{t_{i}}^{t_{i+1}}\mathcal{L}J^{\theta}(t,X_{t-})dt+\int_{t_{i}}^{t_{i+1}}\frac{\partial J^{\theta}}{\partial x}(t,X_{t-})^{\dagger}\sigma(t,X_{t-})dW_{t},\\
    &b_{i}=\int_{t_{i}}^{t_{i+1}}\left[J^{\theta}(t,X_{t})-J^{\theta}(t,X_{t-})\right]dN_{t},\\
    &c_{i}=\int_{t_{i-1}}^{t_{i}}\mathcal{L}J^{\theta}(t,X_{t-})dt+\int_{t_{i-1}}^{t_{i}}\frac{\partial J^{\theta}}{\partial x}(t,X_{t-})^{\dagger}\sigma(t,X_{t-})dW_{t},\\
    &d_{i}=\int_{t_{i-1}}^{t_{i}}\left[J^{\theta}(t,X_{t})-J^{\theta}(t,X_{t-})\right]dN_{t}.
\end{align*}
Recall that 
\[\tilde{R}_{2}(\theta)=E\left[\sum_{i=1}^{n-1}|a_{i}+b_{i}||c_{i}+d_{i}|\right]-E\left[\sum_{i=1}^{n-1}|a_{i}||c_{i}|\right].\]
By (\ref{jumpproof2}) and the condition $b_{i}d_{i}=0$, so we can use Lemma \ref{lemma6} with $a_{i},b_{i},c_{i}$ and $d_{i}$, we have the following:
\begin{align*}
    |\tilde{R}_{2}(\theta)|\leq & \sum\limits_{i=1}^{n-1}E\left\{\left[\int_{t_{i}}^{t_{i+1}}\left|J^{\theta}(t,X_{t})-J^{\theta}(t,X_{t-})\right|dN_{t}\right]\right.\\
    &\cdot\left[\left|\int_{t_{i+1}}^{t_{i+2}}\mathcal{L}J^{\theta}(t,X_{t-})dt+\int_{t_{i+1}}^{t_{i+2}}\frac{\partial J^{\theta}}{\partial x}(t,X_{t-})^{\dagger}\sigma(t,X_{t-})dW_{t}\right|\right.\\
    & +\left.\left.\left|\int_{t_{i-1}}^{t_{i}}\mathcal{L}J^{\theta}(t,X_{t-})dt+\int_{t_{i-1}}^{t_{i}}\frac{\partial J^{\theta}}{\partial x}(t,X_{t-})^{\dagger}\sigma(t,X_{t-})dW_{t}\right|\right]\right\},
\end{align*}
where $t_{-1}=t_{0}=0$, and $t_{n+1}=t_{n}=T$ for simplicity.

 Assumption \ref{ass3} implies that for any compact set $\Gamma$, $\sup_{0\leq t\leq T} \left\{E\left|J^{\theta}(t,X_{t})\right|^{2}\right\}^{\frac{1}{2}}$ is bounded for all $\theta\in\Gamma$. Assumption \ref{ass1}(v) states that $\sup\limits_{0\leq i\leq n-1}P\left(N_{t_{i+1}}-N_{t_{i}}=1\right)=\mathcal{O}(\Delta t)$. Hence, for all $\theta\in\Gamma$, by Lemma \ref{lemma9}, we have the following:
\begin{align*}
    |\tilde{R}_{2}(\theta)|&\leq 8\left\{\sup_{0\leq i\leq n-1}P\left(N_{t_{i+1}}-N_{t_{i}}=1\right)\cdot\sup_{0\leq t\leq T} E\left[J^{\theta}(t,X_{t})\right]^{2}\cdot T\right\}^{1/2}\Vert\mathcal{L}J^{\theta}(\cdot,X_{\cdot})\Vert_{L^{2}}\\
    &+8(C\Delta t)^{\frac{1}{4}}\left\{\sup_{0\leq t\leq T} E\left|J^{\theta}(t,X_{t})\right|^{2}\right\}^{\frac{1}{2}}\left[\left\Vert\left|\frac{\partial J^{\theta}}{\partial x}(\cdot,X_{\cdot})^{\dagger}\sigma(\cdot,X_{\cdot})\right|^{2}\right\Vert_{L^{2}}\right]^{\frac{1}{2}}\\
    &\quad\cdot
    \left[\frac{T}{
    \Delta t}\sup_{0\leq i\leq n-1}P\left(N_{t_{i+1}}-N_{t_{i}}=1\right)\right]^{\frac{3}{4}}\rightarrow 0\quad\text{ as }\Delta t\rightarrow 0.
\end{align*}
The desired result follows from Lemma \ref{lemma3}.
\end{proof}

\end{document}